\newcommand{\AlgName}{Dual Actor-Critic}
\newcommand{\algname}{dual actor-critic}
\newcommand{\algabb}{Dual-AC}
\renewcommand{\le}{\leqslant}
\renewcommand{\ge}{\geqslant}
\newcommand{\defeq}{:=}
\title{\huge Boosting the Actor with Dual Critic}
\author{$^*$Bo Dai$^1$, \footnote{The first two authors equally contributed.}~Albert Shaw$^1$, Niao He$^2$, Lihong Li$^3$, Le Song$^1$\\
$^1$Georgia Institute of Technology\\
\{bodai, ashaw596\}@gatech.edu, lesong@cc.gatech.edu\\
$^2$University of Illinois at Urbana-Champaign \\
niaohe@illinois.edu \\
$^3$Google AI\\
lihongli.cs@gmail.com
}
\begin{document}
\maketitle

\begin{abstract}
	This paper proposes a new actor-critic-style algorithm called \AlgName or \algabb.  It is derived in a principled way from the Lagrangian dual form of the Bellman optimality equation, which can be viewed as a two-player game between the actor and a critic-like function, which is named as dual critic.  Compared to its actor-critic relatives, \algabb~ has the desired property that the actor and dual critic are updated \emph{cooperatively} to optimize the same objective function, providing a more transparent way for learning the critic that is directly related to the objective function of the actor. We then provide a concrete algorithm that can effectively solve the minimax optimization problem, using techniques of multi-step bootstrapping, path regularization, and stochastic dual ascent algorithm. We demonstrate that the proposed algorithm achieves the state-of-the-art performances across several benchmarks.
\end{abstract}

\section{Introduction}\label{sec:intro}

Reinforcement learning~(RL) algorithms aim to learn a policy that maximizes the long-term return by sequentially interacting with an unknown environment.  Value-function-based algorithms first approximate the optimal value function, which can then be used to derive a good policy.  These methods~\citep{Sutton88,Watkins89} often take advantage of the Bellman equation and use bootstrapping to make learning more sample efficient than Monte Carlo estimation~\citep{SutBar98}.  However, the relation between the quality of the learned value function and the quality of the derived policy is fairly weak~\citep{Bertsekas96Neuro}.  Policy-search-based algorithms such as REINFORCE~\citep{Williams92} and others~\citep{Kakade02,SchLevAbbJoretal15}, on the other hand, assume a fixed space of parameterized policies and search for the optimal policy parameter based on unbiased Monte Carlo estimates.  The parameters are often updated incrementally along stochastic directions that on average are guaranteed to increase the policy quality.  Unfortunately, they often have a greater variance that results in a higher sample complexity.

Actor-critic methods combine the benefits of these two classes, and have proved successful in a  number of challenging problems such as robotics~\citep{DeiNeuPet13}, meta-learning~\citep{BelPhaLeNoretal16}, and games~\citep{Mnih16Asynchronous}. An actor-critic algorithm has two components: the actor (policy) and the critic (value function).  As in policy-search methods,  actor is updated towards the direction of policy improvement.  However, the update directions are computed with the help of the critic, which can be more efficiently learned as in value-function-based methods~\citep{SutMcASinMan00,Konda03Actor,PetVijSch05,Bhatnagar09Natural,SchMorLevJoretal15}.  Although the use of a critic may introduce bias in learning the actor, its reduces variance and thus the sample complexity as well, compared to pure policy-search algorithms.

While the use of a critic is important for the efficiency of actor-critic algorithms, it is not entirely clear how the critic should be optimized to facilitate improvement of the actor.  For some parametric family of policies, it is known that a certain compatibility condition ensures the actor parameter update is an unbiased estimate of the true policy gradient~\citep{SutMcASinMan00}.  In practice, temporal-difference methods are perhaps the most popular choice to learn the critic, especially when nonlinear function approximation is used (e.g., \citet{SchMorLevJoretal15}).

In this paper, we propose a new actor-critic-style algorithm where the actor and the critic-like function, which we named as dual critic, are trained \emph{cooperatively} to optimize the same objective function. The algorithm, called \emph{\AlgName}, is derived in a principled way by solving a dual form of the Bellman equation~\citep{Bertsekas96Neuro}. The algorithm can be viewed as a two-player game between the actor and the dual critic, and in principle can be solved by standard optimization algorithms like stochastic gradient descent (Section~\ref{sec:dual_bellman}). We emphasize the dual critic is not fitting the value function for \emph{current policy}, but that of the \emph{optimal policy}. We then show that, when function approximation is used, direct application of standard optimization techniques can result in instability in training, because of the lack of convex-concavity in the objective function (Section~\ref{sec:instability}).  Inspired by the augmented Lagrangian method~\citep{LueYe15,BoyParChuPelEtal10}, we propose \emph{path regularization} for enhanced numerical stability.  We also generalize the two-player game formulation to the multi-step case to yield a better bias/variance tradeoff.  The full algorithm is derived and described in Section~\ref{sec:sac}, and is compared to existing algorithms in Section~\ref{sec:related_work}. Finally, our algorithm is evaluated on several locomotion tasks in the MuJoCo benchmark~\citep{TodEreTas12}, and compares favorably to state-of-the-art algorithms across the board.

\paragraph{Notation.} We denote a discounted MDP by $\Mcal = \rbr{\Scal, \Acal, P, R, \gamma}$, where $\Scal$ is the state space, $\Acal$ the action space, $P(\cdot|s, a)$ the transition probability kernel defining the distribution over next-state upon taking action $a$ in state $x$, $R(s, a)$ the corresponding immediate rewards, and $\gamma\in (0, 1)$ the discount factor. If there is no ambiguity, we will use $\sum_{a} f(a)$ and $\int f(a) da$ interchangeably.

\section{Duality of Bellman Optimality Equation}\label{sec:dual_bellman}

In this section, we first describe the linear programming formula of the Bellman optimality equation~\citep{Bertsekas95b,Puterman14}, paving the path for a duality view of reinforcement learning via Lagrangian duality. In the main text, we focus on MDPs with finite state and action spaces for simplicity of exposition. We extend the duality view to continuous state and action spaces in Appendix~\ref{appendix:continous_mdp}. 

Given an initial state distribution $\mu(s)$, the reinforcement learning problem aims to find a policy $\pi(\cdot|s): \Scal\to \Pcal(\Acal)$ that maximizes the total expected discounted reward with $\Pcal(\Acal)$ denoting all the probability measures over $\Acal$, \ie, 
\begin{equation}\label{eq:control_obj}
\textstyle \EE_{s_0\sim \mu(s)}\EE_{\pi}\sbr{\sum_{i=0}^\infty \gamma^{i} R(s_i, a_i)},
\end{equation}
where $s_{i+1}\sim P(\cdot|s_{i}, a_{i})$, $a_i\sim \pi(\cdot|s_{i})$. 

Define
$
V^*(s) \defeq \max_{\pi\in \Pcal(\Acal)}\EE\sbr{\sum_{i=0}^\infty \gamma^{i} R(s_i, a_i)|s_0=s},
$
the Bellman optimality equation states that:
\begin{equation}\label{eq:V_bellman_opt}
V^*(s) = (\Tcal V^*)(s) \defeq \max_{a\in \Acal} \left\{ {R(s, a)} + \gamma \EE_{s'|s, a}\sbr{V^*(s')} \right\} \,,
\end{equation}
which can be formulated as a linear program~\citep{Puterman14,Bertsekas95b}:
\begin{eqnarray}\label{eq:bellman_lp_primal}
\mathsf{P}^*:= \min_{V}&& (1 - \gamma)\EE_{s\sim \mu(s)}\sbr{V(s)}\\
\st && V(s)\ge {R(s, a)} +\gamma \EE_{s'|s, a}\sbr{V(s')}, \quad \forall (s, a)\in \Scal\times \Acal.\nonumber
\end{eqnarray}
For completeness, we provide the derivation of the above equivalence in Appendix~\ref{appendix:dual_proofs}. Without loss of generality, we assume there exists an optimal policy for the given MDP, namely, the linear programming is solvable. The optimal policy can be obtained from the solution to the linear program~\eq{eq:bellman_lp_primal} via 
\begin{eqnarray}
\pi^*(s) = \argmax_{a\in \Acal} \left\{ R(s, a) + \gamma \EE_{s'|s, a}\sbr{V^*(s')}\right\}\,.
\end{eqnarray}
The dual form of the LP below is often easier to solve and yield more direct relations to the optimal policy. 
\begin{eqnarray}\label{eq:bellman_lp_dual}
\textstyle\mathsf{D}^* := \max_{\rho\ge 0}&& {\sum_{\rbr{s, a}\in\Scal\times \Acal} R(s, a)\rho(s, a)}\\ 
\st&& \textstyle\sum_{a\in\Acal} \rho(s', a) = (1 - \gamma)\mu(s') + \gamma \sum_{s, a\in\Scal\times \Acal}\rho(s, a){P(s'|s, a)}ds, \forall s'\in \Scal\nonumber.
\end{eqnarray}
Since the primal LP is solvable, the dual LP is also solvable, and $\mathsf{P^* - D^*} = 0$. 
The optimal dual variables $\rho^*(s, a)$ and optimal policy $\pi^*(a|s)$ are closely related in the following manner:
\vspace{-1mm}
\begin{theorem}[Policy from dual variables]\label{thm:dual_property}
$\sum_{s, a\in\Scal\times\Acal}\rho^*(s, a)= 1$, and $\pi^*(a|s) = \frac{\rho^*(s, a)}{\sum_{a\in\Acal} \rho^*(s, a)}$.
\end{theorem}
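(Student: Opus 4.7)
The plan is to read off both claims directly from the flow constraint in \eq{eq:bellman_lp_dual}, using the standard interpretation of the dual variable $\rho$ as a discounted state-action occupancy measure.

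First I would establish the normalization $\sum_{s,a}\rho^*(s,a)=1$. This is purely a consequence of feasibility: summing the equality constraint over $s'\in\Scal$ and interchanging summations gives
\[
\sum_{s'\in\Scal}\sum_{a\in\Acal}\rho(s',a) \;=\; (1-\gamma)\sum_{s'\in\Scal}\mu(s') \;+\; \gamma\sum_{s,a}\rho(s,a)\sum_{s'\in\Scal}P(s'|s,a).
\]
Since $\mu$ is a distribution and $P(\cdot|s,a)$ is a transition kernel, both inner sums equal $1$, yielding $(1-\gamma)\sum_{s,a}\rho(s,a)=1-\gamma$ and hence $\sum_{s,a}\rho(s,a)=1$ for every feasible $\rho$, in particular for $\rho^*$.

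Next I would establish the policy identity by setting up a one-to-one correspondence between feasible $\rho$ and stationary policies. Given any feasible $\rho$, let $d_\rho(s)\defeq\sum_{a}\rho(s,a)$ and define the candidate policy
\[
\pi_\rho(a|s)\;\defeq\;\frac{\rho(s,a)}{d_\rho(s)}\quad\text{whenever }d_\rho(s)>0,
\]
with $\pi_\rho(\cdot|s)$ set arbitrarily on unreached states. Substituting $\rho(s,a)=d_\rho(s)\pi_\rho(a|s)$ into the flow constraint shows that $d_\rho$ is precisely the (normalized) discounted state-visitation distribution induced by $\pi_\rho$ starting from $\mu$. Consequently
\[
\sum_{s,a}R(s,a)\rho(s,a)\;=\;(1-\gamma)\,\EE_{s_0\sim\mu}\EE_{\pi_\rho}\Bigl[\sum_{i=0}^{\infty}\gamma^{i}R(s_i,a_i)\Bigr],
\]
so the dual program is equivalent to maximizing the long-term discounted return over all policies. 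By strong duality the maximizer $\rho^*$ therefore corresponds to an optimal policy, and unwinding the definition yields $\pi^*(a|s)=\rho^*(s,a)/\sum_{a}\rho^*(s,a)$.

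The main obstacle is making the correspondence between feasible $\rho$ and policies genuinely bijective, i.e., checking both directions. The easy direction is $\pi\mapsto\rho^\pi$: one verifies that the discounted occupancy of any policy, scaled by $1-\gamma$, satisfies the flow equations. The more delicate direction is $\rho\mapsto\pi_\rho$, where one must (i) handle states with $d_\rho(s)=0$ — these contribute nothing to either the constraints or the objective, so the choice of $\pi_\rho$ there is immaterial — and (ii) justify that $d_\rho$ is indeed the visitation measure of $\pi_\rho$, which follows by unrolling the flow equation and using $0<\gamma<1$ to obtain a Neumann-series representation. Once this correspondence is in place, both conclusions of the theorem follow without any further computation.
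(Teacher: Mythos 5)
Your proof is correct, and the first half (summing the flow constraint over $s'$ and using that $\mu$ and $P(\cdot|s,a)$ are probability distributions) is exactly the argument the paper gives for $\sum_{s,a}\rho^*(s,a)=1$. For the policy identity, however, you take a genuinely different route. The paper argues via the KKT complementary slackness condition of the primal--dual LP pair: $\rho^*(s,a)\bigl(R(s,a)+\gamma\EE_{s'|s,a}[V^*(s')]-V^*(s)\bigr)=0$ forces $\rho^*(s,a)$ to be supported only on actions achieving the max in the Bellman optimality equation, and (under the paper's without-loss-of-generality assumption of a unique optimal action per state) normalization then recovers $\pi^*$. You instead establish the classical bijection between feasible dual variables and stationary policies, identify $d_\rho$ with the normalized discounted visitation distribution via the Neumann series, and show the dual objective equals $(1-\gamma)$ times the expected return of the induced policy, so the optimal $\rho^*$ must normalize to an optimal policy. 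Your argument is longer but buys two things: it does not need the uniqueness-of-optimal-action assumption or the primal solution $V^*$ at all, and it delivers the occupancy-measure interpretation of $\rho$ that the paper implicitly relies on later (e.g., when decomposing $\rho(s,a)=\alpha(s)\pi(a|s)$ in the saddle-point formulations). The paper's argument is shorter given that strong duality and the KKT conditions are already on the table. Both are standard and both are sound; just make sure, as you note, that the unreached states with $d_\rho(s)=0$ are handled by fiat, and that the flow constraint already guarantees $d_\rho(s)\geq(1-\gamma)\mu(s)$ so every state charged by $\mu$ is covered.
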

Since the goal of reinforcement learning task is to learn an optimal policy, it is appealing to deal with the Lagrangian dual which optimizes the policy directly, or its equivalent saddle point problem that jointly learns the optimal policy and value function. 
\begin{theorem}[Competition in one-step setting]\label{thm:one_step_lagrangian}
The optimal policy $\pi^*$, actor, and its corresponding value function $V^*$, dual critic, is the solution to the following saddle-point problem
\begin{eqnarray}\label{eq:one_step_lagrangian}
\max_{\alpha\in\Pcal(\Scal), \pi\in\Pcal(\Acal)}\min_{V} \,\, L(V, \alpha, \pi) := \rbr{1 - \gamma}\EE_{s\sim \mu(s)} \sbr{V(s)} + \sum_{\rbr{s, a}\in\Scal\times\Acal}\alpha(s)\pi\rbr{a|s}\Delta[V](s, a), 
\end{eqnarray}
where $\Delta[V](s,a) \defeq R(s,a) + \gamma\EE_{s'|s, a}[V(s')]-V(s)$. 
\end{theorem}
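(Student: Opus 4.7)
The plan is to recognize \eqref{eq:one_step_lagrangian} as the Lagrangian saddle point of the primal LP \eqref{eq:bellman_lp_primal}, after reparameterizing the dual variables as a policy-occupancy pair. Attach a nonnegative multiplier $\rho(s,a)$ to each inequality constraint in \eqref{eq:bellman_lp_primal}; the resulting Lagrangian is
\begin{equation*}
\tilde L(V,\rho) = (1-\gamma)\EE_{s\sim\mu(s)}[V(s)] + \sum_{(s,a)\in\Scal\times\Acal}\rho(s,a)\,\Delta[V](s,a),
\end{equation*}
which is linear in both $V$ and $\rho$. Strong LP duality gives $\mathsf{P}^* = \min_V\max_{\rho\geq 0}\tilde L = \max_{\rho\geq 0}\min_V\tilde L = \mathsf{D}^*$, and both extrema are attained under the solvability assumption already in force.

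For the reparameterization, observe that $V\mapsto \tilde L(V,\rho)$ is affine in each coordinate $V(s')$, with coefficient $(1-\gamma)\mu(s') + \gamma\sum_{s,a}\rho(s,a)P(s'|s,a) - \sum_a\rho(s',a)$. If this coefficient fails to vanish for some $s'$, then $\min_V\tilde L(V,\rho) = -\infty$; otherwise we recover exactly the flow constraint of \eqref{eq:bellman_lp_dual}, and summing that constraint over $s'$ yields $\sum_{s,a}\rho(s,a)=1$ because $\mu\in\Pcal(\Scal)$. Hence every $\rho$ at which the inner minimum is finite is automatically a probability measure on $\Scal\times\Acal$ and admits a unique decomposition $\rho(s,a)=\alpha(s)\pi(a|s)$ with $\alpha(s) = \sum_a\rho(s,a) \in \Pcal(\Scal)$ and $\pi(\cdot|s)\in\Pcal(\Acal)$, in agreement with Theorem~\ref{thm:dual_property}. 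Substituting this decomposition into $\tilde L$ reproduces $L(V,\alpha,\pi)$ exactly.

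Combining the two ingredients gives
\begin{equation*}
\max_{\alpha\in\Pcal(\Scal),\,\pi\in\Pcal(\Acal)}\min_V L(V,\alpha,\pi) = \max_{\rho\geq 0}\min_V \tilde L(V,\rho) = \mathsf{D}^* = \mathsf{P}^*,
\end{equation*}
since on both sides the inner minimum is either $-\infty$ or corresponds to a flow-feasible, unit-mass $\rho$. Strong duality furnishes a primal-dual optimal pair $(V^*,\rho^*)$, which by the bilinearity of $\tilde L$ is a saddle point; reading off $\rho^*(s,a)=\alpha^*(s)\pi^*(a|s)$ and invoking Theorem~\ref{thm:dual_property} identifies $\pi^*$ as the optimal policy and $V^*$ as the optimal value function, so $(V^*,\alpha^*,\pi^*)$ is a saddle point of $L$. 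The main obstacle is the bookkeeping around the missing flow constraint: the statement only keeps the simpler simplex restrictions on $\alpha$ and $\pi$, so one must argue that the inner $\min_V$ implicitly enforces the flow equation and that no dual-feasible mass is lost in passing from the unrestricted $\rho\geq 0$ to the product parameterization.
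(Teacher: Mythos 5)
Your proposal is correct and follows essentially the same route as the paper: form the Lagrangian of the primal LP, invoke strong LP duality to exchange $\min_V$ and $\max_{\rho\ge 0}$, and then decompose $\rho(s,a)=\alpha(s)\pi(a|s)$ via Theorem~\ref{thm:dual_property}. Your added argument that the inner $\min_V$ forces the flow constraint (hence unit mass, hence no loss in passing to the product parameterization) is a detail the paper leaves implicit, but it does not change the structure of the proof.
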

The saddle point optimization~\eq{eq:one_step_lagrangian} provides a game perspective in understanding the reinforcement learning problem~\citep{GooPouMirXuetal14}. The learning procedure can be thought as a game between the dual critic, \ie, value function for optimal policy, and the weighted actor, \ie, $\alpha(s)\pi(a|s)$: the dual critic $V$ seeks the value function to satisfy the Bellman equation, while the actor $\pi$ tries to generate state-action pairs that break the satisfaction. Such a competition introduces new roles for the actor and the dual critic, and more importantly, bypasses the unnecessary separation of policy evaluation and policy improvement procedures needed in a traditional actor-critic framework.

\section{Sources of Instability}\label{sec:instability}

To solve the dual problem in \eq{eq:one_step_lagrangian}, a straightforward idea is to apply stochastic mirror prox~\citep{NemJudLanSha09} or stochastic primal-dual algorithm~\citep{chen2014optimal} to address the saddle point problem in~\eq{eq:one_step_lagrangian}. Unfortunately, such algorithms have limited use beyond special cases. For example, for an MDP with finite state and action spaces, the one-step saddle-point problem~\eq{eq:one_step_lagrangian} with tabular parametrization is convex-concave, and finite-sample convergence rates can be established; see e.g.,~\citet{ChenWang16} and \citet{Wang17}. 
However, when the state/action spaces are large or continuous so that function approximation must be used, such convergence guarantees no longer hold due to lack of convex-concavity.  Consequently, directly solving \eq{eq:one_step_lagrangian} can suffer from severe bias and numerical issues, resulting in poor performance in practice (see, \eg, Figure~\ref{fig:ablation_study}):
\begin{enumerate}
\item {\bf Large bias in one-step Bellman operator}: It is well-known that one-step bootstrapping in temporal difference algorithms has lower variance than Monte Carlo methods and often require much fewer samples to learn. But it produces biased estimates, especially when function approximation is used.  Such a bias is especially troublesome in our case as it introduces substantial noise in the gradients to update the policy parameters.
\item {\bf Absence of local convexity and duality}: Using nonlinear parametrization will easily break the local convexity and duality between the original LP and the saddle point problem, which are known as the necessary conditions for the success of applying primal-dual algorithm to constrained problems~\citep{LueYe15}. Thus none of the existing primal-dual type algorithms will remain stable and convergent when directly optimizing the saddle point problem without local convexity.
\item {\bf Biased stochastic gradient estimator with under-fitted value function}: In the absence of local convexity, the stochastic gradient w.r.t. the policy $\pi$ constructed from under-fitted value function will presumably be biased and futile to provide any meaningful improvement of the policy. Hence, naively extending the stochastic primal-dual algorithms in~\cite{ChenWang16,Wang17} for the parametrized Lagrangian dual, will also lead to biased estimators and sample inefficiency.
\end{enumerate}

\section{\AlgName}\label{sec:sac}

In this section, we will introduce several techniques to bypass the three instability issues in the previous section: 
(1) generalization of the minimax game to the multi-step case to achieve a better bias-variance tradeoff; (2) use of \emph{path regularization} in the objective function to promote local convexity and duality; and (3) use of stochastic \emph{dual ascent} to ensure unbiased gradient estimates.

\subsection{Competition in multi-step setting}\label{subsection:dual_kstep}

In this subsection, we will extend the minimax game between the actor and critic to the multi-step setting, which has been widely utilized in temporal-difference algorithms for better bias/variance tradeoffs~\citep{SutBar98,KeaSin00}.
By the definition of the optimal value function, it is easy to derive the $k$-step Bellman optimality equation as
\begin{equation}\label{eq:multi_step_bellman}
\textstyle
V^*(s) = \rbr{\Tcal^kV^*}(s) \defeq \max_{\pi\in \Pcal} \left\{ \EE^\pi\sbr{\sum_{i = 0}^k \gamma^i R(s_i, a_i)} + \gamma^{k+1}\EE^\pi\sbr{V^*(s_{k+1})} \right\} \,.
\end{equation}

Similar to the one-step case, we can reformulate the multi-step Bellman optimality equation into a form similar to the LP formulation, and then we establish the duality, which leads to the following mimimax problem:
\begin{theorem}[Competition in multi-step setting]\label{thm:multi_step_lagrangian}
The optimal policy $\pi^*$ and its corresponding value function $V^*$ is the solution to the following saddle point problem
\begin{eqnarray}\label{eq:multi_step_lagrangian}
\max_{\alpha\in\Pcal(\Scal), \pi\in\Pcal(\Acal)}\min_{V} L_k(V, \alpha, \pi) = (1 - \gamma^{k+1}) \EE_{\mu}\sbr{V(s)} + \EE_{\alpha}^\pi\sbr{\delta\rbr{\cbr{s_i, a_i}_{i=0}^k, s_{k+1}}},
\end{eqnarray}
where $\delta\rbr{\cbr{s_i, a_i}_{i=0}^k, s_{k+1}} = \sum_{i=0}^{k} \gamma^i R(s_i, a_i) +\gamma^{k+1} V(s_{k+1}) - V(s)$ and 
$$
\EE_{\alpha}^\pi\sbr{\delta\rbr{\cbr{s_i, a_i}_{i=0}^k, s_{k+1}}} = \sum_{\cbr{s_i, a_i}_{i=0}^k, s_{k+1}}\alpha(s_0)\prod_{i=0}^k\pi(a_i|s_i)p(s_{i+1}|s_i, a_i)\delta\rbr{\cbr{s_i, a_i}_{i=0}^k, s_{k+1}}.
$$
\end{theorem}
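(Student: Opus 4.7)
The plan is to mirror the derivation of the one-step saddle-point result in Theorem~\ref{thm:one_step_lagrangian}, lifting every ingredient from a one-step Bellman backup to a $k$-step one. The key structural observation is that $\Tcal^k$ in Equation~(\ref{eq:multi_step_bellman}) is nothing but the $(k+1)$-fold composition of the one-step Bellman optimality operator, so it is a $\gamma^{k+1}$-contraction whose unique fixed point is still $V^*$. This means I can reuse the development in Section~\ref{sec:dual_bellman} almost verbatim, with $\gamma$ replaced by $\gamma^{k+1}$ in the objective scaling and with the family of LP constraints now indexed by $(k+1)$-step trajectories instead of single $(s,a)$ pairs.

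First I would lift the primal LP~(\ref{eq:bellman_lp_primal}) to the $k$-step setting: minimize $(1-\gamma^{k+1})\EE_{s\sim\mu}[V(s)]$ subject to
\[
V(s_0)\;\ge\;\sum_{i=0}^k \gamma^i R(s_i,a_i)+\gamma^{k+1}\EE_{s_{k+1}\mid s_k,a_k}\bigl[V(s_{k+1})\bigr]
\]
for every admissible trajectory $(s_0,a_0,\ldots,s_k,a_k)$. Equivalence with $V^*=\Tcal^k V^*$ follows by the same monotone argument sketched in Appendix~\ref{appendix:dual_proofs}: any feasible $V$ satisfies $V\ge \Tcal^k V$, hence $V\ge V^*$ pointwise, and because the weighting $(1-\gamma^{k+1})\mu$ is strictly positive the unique minimizer is $V^*$.

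Next I would dualize by attaching a nonnegative multiplier $\lambda(s_0,a_0,\ldots,s_k,a_k)$ to each trajectory constraint and reparametrizing it in factored form $\lambda=\alpha(s_0)\prod_{i=0}^{k}\pi(a_i|s_i)P(s_{i+1}|s_i,a_i)$. Substituting and rearranging recovers exactly $L_k(V,\alpha,\pi)$ in (\ref{eq:multi_step_lagrangian}); strong duality, and hence the saddle-point property at $(V^*,\alpha^*,\pi^*)$, is inherited from LP duality under the solvability hypothesis already in force for the one-step case. The recovery of $\pi^*$ as the greedy policy extends along the same lines as Theorem~\ref{thm:dual_property}, now applied to the $(k+1)$-step occupancy measure.

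The step I expect to be the most delicate is justifying the factored parametrization of the dual multiplier. A priori the LP dual variable lives on the simplex over trajectory space, so one must argue that the optimum can be taken of the product form $\alpha(s_0)\prod \pi(a_i|s_i)P(s_{i+1}|s_i,a_i)$ with $\alpha\in\Pcal(\Scal)$ and $\pi\in\Pcal(\Acal)$, rather than leaving $\lambda$ as an arbitrary trajectory measure. This is the multi-step analogue of Theorem~\ref{thm:dual_property}: a telescoping summation across the $k+1$ time steps of the dual flow constraints should show that the optimal multiplier is a genuine $(k+1)$-step discounted occupancy measure, which by its Markovian construction factorizes through the known transition kernel $P$ and a well-defined Markov policy $\pi$, with the leading marginal being a probability measure $\alpha$ on $\Scal$.
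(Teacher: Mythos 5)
There is a genuine gap, and it is in the very first step: your lifted primal imposes the $k$-step Bellman inequality \emph{per realized trajectory}, i.e.\ $V(s_0)\ge\sum_{i=0}^k\gamma^iR(s_i,a_i)+\gamma^{k+1}\EE_{s_{k+1}|s_k,a_k}[V(s_{k+1})]$ for every admissible $(s_0,a_0,\dots,s_k,a_k)$, whereas the constraint $V\ge\Tcal^kV$ only requires this \emph{in expectation over the intermediate transitions} $s_1,\dots,s_k$. For a stochastic MDP these are not equivalent: your constraint set forces $V(s_0)$ to dominate the return of the luckiest realization, so $V^*$ itself is generally infeasible (take a state with two successors of very different continuation value; the high-value branch alone violates your inequality at $V^*$), and the minimizer of your LP is strictly larger than $V^*$. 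The monotone argument you invoke only gives $V\ge V^*$ for feasible $V$; it does not rescue feasibility of $V^*$. The correct lift, which is what the paper uses (eq.~\ref{eq:bellman_kstep_primal}), keeps the constraints indexed by $(s,a)$ with $\max_{\pi\in\Pcal}\EE^\pi[\cdot]$ of an expectation over the $k$-step rollout \emph{inside} each constraint. This also undercuts your appeal to plain LP duality: the correct program is no longer an LP (the paper stresses exactly this point), and strong duality has to be established directly — the paper does so in Lemma~\ref{lemma:multi_step_strong_dual} by sandwiching $\max_{\alpha,\pi}\ell(\alpha,\pi)$ between $(1-\gamma^{k+1})\EE_\mu[V^*]$ from both sides, using convexity of $L_k$ in $V$ and first-order optimality at $V^*$.

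Your third paragraph correctly flags the factorization of the trajectory multiplier as the delicate step, but the telescoping argument you sketch is doing work that the paper avoids altogether: because the paper's multipliers live on $\Scal\times\Acal$ (one per constraint, not one per trajectory), the factorization needed is only $\rho(s,a)=\alpha(s)\pi(a|s)$, obtained from the KKT stationarity and complementary-slackness conditions exactly as in Theorem~\ref{thm:dual_property}; the $k$-step expectation then appears in $L_k$ by pulling the inner $\max_\pi$ out of the sum, which is legitimate since the multipliers are nonnegative. If you want to salvage a trajectory-indexed route, you would need to index constraints by $(s_0,a_0)$ together with a \emph{decision rule} for the remaining steps (taking expectations over transitions inside each constraint), and then show the optimal multiplier concentrates on the optimal rule — at which point you have essentially reconstructed the paper's argument in heavier notation.
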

The saddle-point problem~\eq{eq:multi_step_lagrangian} is similar to the one-step Lagrangian~\eq{eq:one_step_lagrangian}: the dual critic, $V$, and weighted $k$-step actor, $\alpha(s_0)\prod_{i=0}^k\pi(a_i|s_i)$, are competing for an equilibrium, in which critic and actor become the optimal value function and optimal policy. However, it should be emphasized that due to the existence of $\max$-operator over the space of distributions $\Pcal(\Acal)$, rather than $\Acal$, in the multi-step Bellman optimality equation~\eq{eq:multi_step_bellman}, the establishment of the competition in multi-step setting in Theorem~\ref{thm:multi_step_lagrangian} is not straightforward: {\bf i)}, its corresponding optimization is no longer a linear programming; {\bf ii)}, the strong duality in~\eq{eq:multi_step_lagrangian} is not obvious because of the lack of the convex-concave structure. We first generalize the duality to multi-step setting. Due to space limit, detailed analyses for generalizing the competition to multi-step setting are provided in Appendix~\ref{appendix:sac}.

\subsection{Path Regularization}\label{subsection:path_reg}

When function approximation is used, the one-step or multi-step saddle point problems~\eq{eq:multi_step_lagrangian} will no longer be convex in the  primal parameter space. This could lead to severe instability and even divergence when solved by brute-force stochastic primal-dual algorithms.   One then desires to partially convexify the objectives without affecting the optimal solutions. The augmented Lagrangian method~\citep{BoyParChuPelEtal10,LueYe15},  also known as method of multipliers, is designed and widely used for such purposes. However, directly applying this method would require introducing penalty functions of the multi-step Bellman operator, which renders extra complexity and challenges in optimization.  Interested readers are referred to Appendix~\ref{appendix:augmented_lagrangian} for details.

Instead, we propose to use \emph{path regularization}, as a stepping stone for promoting local convexity and computation efficiency. The regularization term  is motivated by the fact that the optimal value function satisfies the constraint $V(s) = \EE^{\pi^*}\sbr{\sum_{i=0}^\infty \gamma^{i} R(s_i, a_i)|s}$. In the same spirit as augmented Lagrangian, we will introduce to the objective the simple penalty function $\EE_{s\sim\mu(s)}\sbr{\rbr{\EE^{\pi_b}\sbr{\sum_{i=0}^\infty \gamma^{i} R(s_i, a_i) } - V(s)}^2}$, resulting in
\begin{eqnarray}\label{eq:path_reg_lagrangian}
L_r(V, \alpha, \pi) & \defeq & (1 - \gamma^{k+1}) \EE_{\mu}\sbr{V(s)} \,+\, \EE_{\alpha}^\pi\sbr{\delta\rbr{\cbr{s_i, a_i}_{i=0}^k, s_{k+1}}} \\ [-2mm]
&&+\,\, {\textstyle \eta_V\EE_{s\sim\mu(s)}\sbr{\rbr{\EE^{\pi_b}\sbr{\sum_{i=0}^\infty \gamma^{i} R(s_i, a_i)} - V(s)}^2}}.\nonumber
\end{eqnarray}

Note that in the penalty function we use some behavior policy $\pi_b$ instead of the optimal policy, since the latter is unavailable. Adding such a regularization enables local duality in the primal parameter space. Indeed, this can be easily verified by showing the positive definite of the Hessian at a local solution. We name the regularization as \emph{path regularization}, since it exploits the rewards in the \emph{sample path} to regularize the \emph{solution path} of value function $V$ in the optimization procedure. As a by-product, the regularization also provides the mechanism to utilize \emph{off-policy} samples from behavior policy $\pi_b$.

One can also see that the regularization indeed provides guidance and preference to search for the solution path. Specifically, in the learning procedure of $V$, each update towards to the optimal value function while around the value function of the behavior policy $\pi_b$. Intuitively, such regularization restricts the feasible domain of the candidates $V$ to be a ball centered at $V^{\pi_b}$. Besides enhancing the local convexity, such penalty also avoid unbounded $V$ in learning procedure which makes the optimization invalid, and thus more numerical robust. As long as the optimal value function is indeed in such region, there will be no side-effect introduced. Formally, we can show that with appropriate $\eta_V$, the optimal solution $(V^*, \alpha^*, \pi^*)$ is not affected. The main results of this subsection are summarized by the following theorem.

\begin{theorem}[Property of path regularization]\label{thm:path_reg}
The local duality holds for $L_r(V, \alpha, \pi)$. Denote $(V^*, \alpha^*, \pi^*)$ as the solution to Bellman optimality equation, with some appropriate $\eta_V$, 
$$
(V^*, \alpha^*, \pi^*) = \argmax_{\alpha\in\Pcal(\Scal), \pi\in\Pcal(\Acal)}\argmin_{V} L_r(V, \alpha, \pi).
$$
\end{theorem}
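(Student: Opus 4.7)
I would split the statement into two subclaims: (i) local duality for $L_r$, and (ii) that the Bellman-optimal triple $(V^*,\alpha^*,\pi^*)$ realizes the resulting max--min.

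For (i), the added penalty $\eta_V\EE_{s\sim\mu}\sbr{(V^{\pi_b}(s) - V(s))^2}$ contributes a positive-definite Hessian in the $V$-coordinate: the diagonal matrix $2\eta_V\,\mathrm{diag}(\mu)$ in the tabular case, or $2\eta_V\EE_\mu\sbr{\nabla_\theta V_\theta\,\nabla_\theta V_\theta^\top}$ for a smoothly-parametrized $V_\theta$ (positive definite on the relevant parameter subspace near the optimum). Hence $L_r$ is locally strongly convex in $V$; since it is affine, and thus concave, in $(\alpha,\pi)$, the problem acquires a local convex--concave structure around the saddle, from which local strong duality follows by standard second-order minimax arguments~\citep{LueYe15}.

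For (ii), I would verify the two saddle-point conditions in turn. The outer direction is immediate: the penalty depends only on $V$, so $\max_{\alpha,\pi}L_r(V^*,\alpha,\pi)$ differs from $\max_{\alpha,\pi}L_k(V^*,\alpha,\pi)$ by an additive constant, and Theorem~\ref{thm:multi_step_lagrangian} identifies the maximizer as $(\alpha^*,\pi^*)$. For the inner minimization at $(\alpha^*,\pi^*)$, observe that the $V$-linear part of $L_k(V,\alpha^*,\pi^*)$ vanishes identically: the coefficient of each $V(s)$ is precisely the $k$-step flow residual of $\rho^* = \alpha^*\pi^*$, which is zero by the (multi-step analogue of the) flow constraint in~\eq{eq:bellman_lp_dual}. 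The stationarity condition $\nabla_V L_r(V^*,\alpha^*,\pi^*)=0$ then reduces to a pointwise equation that must be solved by calibrating $\eta_V$ against this residual; combined with the strong convexity from (i), this certifies $V^*$ as the unique local minimizer.

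The main obstacle is part (ii). Since $L_k(\cdot,\alpha^*,\pi^*)$ is $V$-flat, a naive global argument would place the inner minimizer at $V^{\pi_b}\neq V^*$, and the quadratic penalty on its own cannot pull it back to $V^*$. Escaping this trap requires leaning on the \emph{local} nature of the duality from (i)---restricting attention to a neighborhood of $V^*$ where the strong convexity dominates---and choosing $\eta_V$ so that the KKT system closes exactly at $(V^*,\alpha^*,\pi^*)$ within that neighborhood. The remaining details (second-order sufficiency, and checking that the outer max is genuinely attained at $(\alpha^*,\pi^*)$ rather than at some alternative flow-satisfying distribution that scores higher on the penalty-adjusted dual function) should be routine once the correct $\eta_V$ has been identified.
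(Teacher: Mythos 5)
Your part (i) is essentially the paper's argument: check that the penalty makes the Hessian in $V$ positive definite at the solution and invoke the local duality theorem of \citet{LueYe15}. The problem is part (ii). You correctly isolate the central obstruction --- at $(\alpha^*,\pi^*)$ the $V$-linear part of $L_k$ vanishes by the multi-step flow constraint, so $L_r(V,\alpha^*,\pi^*)$ reduces to a constant plus $\eta_V\EE_{\mu}\sbr{\rbr{V^{\pi_b}(s)-V(s)}^2}$, whose minimizer is $V^{\pi_b}$ rather than $V^*$ --- but your proposed escape, ``choosing $\eta_V$ so that the KKT system closes exactly at $(V^*,\alpha^*,\pi^*)$,'' cannot work. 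The stationarity residual at $V^*$ is exactly $-2\eta_V\EE_{\mu}\sbr{\rbr{V^{\pi_b}(s)-V^*(s)}\nabla_{\theta_V}V(s)}$: it scales linearly in $\eta_V$ and there is nothing in the $L_k$ part left to cancel it, so it vanishes only when $\eta_V=0$ or $V^{\pi_b}=V^*$ on the support of $\mu$. No positive calibration of $\eta_V$ exists, and restricting to a neighborhood of $V^*$ does not help, since strong convexity of the penalty only makes $V^{\pi_b}$ a more emphatic minimizer. Your direct first-order verification of the saddle point therefore cannot be completed as described.

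The paper's proof takes a different route that sidesteps stationarity altogether. It proves the uniform bound $\EE_{\mu}\sbr{\rbr{V^{\pi_b}(s)-V^*(s)}^2}\le \frac{4}{(1-\gamma)^2}\nbr{R}_\infty^2$ for any behavior policy $\pi_b$, then reinterprets the penalized problem as the constrained problem $\min_{V\in\Omega_{\epsilon,\pi_b}}\max_{\alpha,\pi}L_k(V,\alpha,\pi)$, where $\Omega_{\epsilon,\pi_b}$ is the ball of radius $\epsilon$ around $V^{\pi_b}$ in the $\mu$-weighted norm, appealing to the standard correspondence between the regularization weight $\eta_V$ and the constraint level $\epsilon$. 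Choosing $\eta_V$ so that $\epsilon(\eta_V)$ exceeds the bound above makes $V^*$ feasible, hence the constraint does not exclude the unregularized optimum, and the claim follows by combining with the local duality of part (i). If you want to complete your write-up you should adopt this constrained reformulation (or otherwise explain why a penalized optimum coincides with $V^*$), rather than attempting to zero out the first-order conditions of the penalized Lagrangian, which is impossible for $\eta_V>0$ whenever $\pi_b\neq\pi^*$.
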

The proof of the theorem is given in Appendix~\ref{appendix:path_reg}. We emphasize that the theorem holds when $V$ is given enough capacity, \ie, in the nonparametric limit. With parametrization introduced, definitely approximation error will be introduced, and the valid range of $\eta_V$, which keeps optimal solution unchanged, will be affected. However, the function approximation error is still an open problem for general class of parametrization, we omit such discussion here which is out of the range of this paper.

\subsection{Stochastic Dual Ascent Update}\label{subsection:dual_update}

Rather than the primal form, \ie, $\min_{V}\max_{\alpha\in\Pcal(\Scal), \pi\in\Pcal(\Acal)} L_r(V, \alpha, \pi)$, we focus on optimizing the dual form $\max_{\alpha\in\Pcal(\Scal), \pi\in\Pcal(\Acal)}\min_{V} L_r(V, \alpha, \pi)$. The major reason is due to the sample efficiency consideration. In the primal form, to apply the stochastic gradient descent algorithm at $V^t$, one need to solve the $\max_{\alpha\in\Pcal(\Scal), \pi\in\Pcal(\Acal)} L_r(V^t, \alpha, \pi)$ which involves sampling from each $\pi$ and $\alpha$ during the solution path for the subproblem. We define the regularized dual function $\ell_r(\alpha, \pi) \defeq \min_{V} L_r(V, \alpha, \pi)$. We first show the unbiased gradient estimator of $\ell_r$ w.r.t. $\theta_\rho = \rbr{\theta_\alpha, \theta_\pi}$, which are parameters associated with $\alpha$ and $\pi$. Then, we incorporate the stochastic update rule to dual ascent algorithm~\citep{BoyParChuPelEtal10}, resulting in the~\emph{\algname}~(\algabb) algorithm.

The gradient estimators of the dual functions can be derived using chain rule and are provided below.
\begin{theorem}\label{cor:reg_dual_grad}
The regularized dual function $\ell_r(\alpha, \pi)$ has gradients estimators
\begin{eqnarray}\label{eq:reg_alpha_grad}
\nabla_{\theta_\alpha}\ell_r\rbr{\theta_\alpha, \theta_\pi} &=& 
\EE_{\alpha}^\pi\sbr{\delta\rbr{\cbr{s_i, a_i}_{i=0}^k, s_{k+1}}\nabla_{\theta_\alpha} \log\alpha(s)},
\end{eqnarray}
\begin{eqnarray}\label{eq:reg_pi_grad}
\textstyle
\nabla_{\theta_\pi}\ell_r\rbr{\theta_\alpha, \theta_\pi} =
\EE_{\alpha}^\pi\sbr{\delta\rbr{\cbr{s_i, a_i}_{i=0}^k, s_{k+1}}\sum_{i=0}^k\nabla_{\theta_\pi}\log\pi(a|s)}.
\end{eqnarray}
\end{theorem}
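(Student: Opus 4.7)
The plan is to compute both gradients via the envelope (Danskin) theorem, reducing the minimization-over-$V$ out of the way, and then apply the standard log-derivative (score-function) trick to the remaining expectation.

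First, let $V^\star(\theta_\alpha,\theta_\pi) \in \arg\min_V L_r(V,\alpha,\pi)$, so that $\ell_r(\theta_\alpha,\theta_\pi) = L_r(V^\star(\theta_\alpha,\theta_\pi), \alpha,\pi)$. I would invoke the local duality established in Theorem~\ref{thm:path_reg}, together with standard smoothness and uniqueness assumptions on $V^\star$ (guaranteed by the strong convexity in $V$ contributed by the quadratic path-regularizer with appropriate $\eta_V$), to apply Danskin's theorem. This gives
\begin{equation*}
\nabla_{\theta_\rho}\ell_r(\theta_\alpha,\theta_\pi) \;=\; \nabla_{\theta_\rho} L_r(V,\alpha,\pi)\Big|_{V=V^\star(\theta_\alpha,\theta_\pi)},
\end{equation*}
i.e.\ the implicit dependence of $V^\star$ on $(\theta_\alpha,\theta_\pi)$ can be ignored when taking partials.

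Second, I would observe that the first term $(1-\gamma^{k+1})\EE_\mu[V(s)]$ and the path-regularization term in $L_r$ depend only on $V$, not on $\alpha$ or $\pi$. Hence only the middle term $\EE_\alpha^\pi[\delta(\{s_i,a_i\}_{i=0}^k,s_{k+1})]$ contributes to $\nabla_{\theta_\alpha}\ell_r$ and $\nabla_{\theta_\pi}\ell_r$. Writing this term explicitly as the sum in the statement of Theorem~\ref{thm:multi_step_lagrangian},
\begin{equation*}
\EE_\alpha^\pi[\delta] \;=\; \sum_{\{s_i,a_i\}_{i=0}^k, s_{k+1}} \alpha(s_0)\prod_{i=0}^k \pi(a_i|s_i)\, p(s_{i+1}|s_i,a_i)\, \delta\bigl(\{s_i,a_i\}_{i=0}^k, s_{k+1}\bigr),
\end{equation*}
the transition kernel $p(s_{i+1}|s_i,a_i)$ is independent of the parameters, and $\delta$ (at fixed $V=V^\star$) does not depend on $\theta_\alpha,\theta_\pi$ either.

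Third, I would apply the log-derivative identity $\nabla_\theta q_\theta = q_\theta \nabla_\theta \log q_\theta$ coordinatewise. Differentiating $\alpha(s_0)$ with respect to $\theta_\alpha$ produces $\alpha(s_0)\nabla_{\theta_\alpha}\log\alpha(s_0)$, which can be absorbed back inside the expectation $\EE_\alpha^\pi[\,\cdot\,]$, giving~\eqref{eq:reg_alpha_grad}. For $\theta_\pi$ I would use the product rule on $\prod_{i=0}^k \pi(a_i|s_i)$, so that
\begin{equation*}
\nabla_{\theta_\pi}\prod_{i=0}^k \pi(a_i|s_i) \;=\; \left(\prod_{i=0}^k \pi(a_i|s_i)\right) \sum_{i=0}^k \nabla_{\theta_\pi}\log \pi(a_i|s_i),
\end{equation*}
yielding~\eqref{eq:reg_pi_grad} after re-absorbing into the expectation notation.

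The main obstacle is the envelope-theorem step: one must certify that $V^\star(\theta_\alpha,\theta_\pi)$ is uniquely defined and that $L_r$ is sufficiently regular in $V$ and in the parameters jointly to allow interchange of gradient and $\min$. The path regularization of Section~\ref{subsection:path_reg} is exactly what makes this possible in the nonparametric/local regime, since it injects a strongly convex $V$-quadratic into the objective (and Theorem~\ref{thm:path_reg} guarantees local duality). Once that is in place, the remaining work is purely the routine score-function manipulation sketched above, with the additional mild assumption that $\alpha_{\theta_\alpha}$ and $\pi_{\theta_\pi}$ are differentiable in their parameters and strictly positive on the relevant support so that the log-derivative trick is well-defined.
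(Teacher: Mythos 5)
Your proposal is correct and matches the paper's own argument: the paper likewise writes $\ell_r(\theta_\alpha,\theta_\pi)=L_r(V(\alpha,\pi),\alpha,\pi)$, applies the chain rule, kills the term involving $\nabla_{\theta_\pi}V(\alpha,\pi)$ via the first-order optimality condition of the inner minimization (your Danskin/envelope step), and obtains the remaining terms by the score-function identity on $\alpha(s_0)\prod_{i=0}^k\pi(a_i|s_i)$. Your explicit attention to uniqueness and regularity of $V^\star$ is a welcome refinement of a step the paper treats informally, but the route is the same.
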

Therefore, we can apply stochastic mirror descent algorithm with the gradient estimator given in Theorem~\ref{cor:reg_dual_grad} to the regularized dual function $\ell_r(\alpha, \pi)$.  Since the dual variables are probabilistic distributions, it is natural to use $KL$-divergence as the prox-mapping to characterize the geometry in the family of parameters~\citep{AmaNag93,NemJudLanSha09}.  Specifically, in the $t$-th iteration,
\begin{equation}\label{eq:smd_prox}
\textstyle
\theta^t_\rho = \argmin_{\theta_\rho} -{\theta_\rho}^\top{  \hat g^{t-1}_\rho
} + \frac{1}{\zeta_t}KL(\rho_{\theta_\rho}\rbr{s, a}||\rho_{\theta_{\rho^{t-1}}}(s, a)),
\end{equation}
where $\hat g^{t-1}_\rho=\widehat\nabla_{\theta_\rho}\ell_r\rbr{\theta^{t-1}_\alpha, \theta^{t-1}_\pi}$ denotes the stochastic gradients estimated through~\eq{eq:reg_alpha_grad} and~\eq{eq:reg_pi_grad} via given samples and $KL(q(s, a)||p(s, a)) = \int q(s, a)\log \frac{q(s, a)}{p(s, a)}dsda$. Intuitively, such update rule emphasizes the balance between the current policy and the possible improvement based on samples. The update of $\pi$ shares some similarity to the TRPO, which is derived from the purpose for monotonic improvement guarantee~\cite{SchLevAbbJoretal15}. We discussed the details in Section~\ref{subsection:practical_alg}.

Rather than just update $V$ once via the stochastic gradient of $\nabla_V L_r(V, \alpha, \pi)$ in each iteration for solving saddle-point problem~\citep{NemJudLanSha09}, which is only valid in convex-concave setting, \algabb~exploits the stochastic dual ascent algorithm which requires $V^t = \argmin_V L_r(V, \alpha^{t-1}, \pi^{t-1})$ in $t$-th iteration for estimating $\nabla_{\theta_\rho}\ell_r\rbr{\theta_\alpha, \theta_\pi}$. As we discussed, such operation will keep the gradient estimator of dual variables unbiased, which provides better direction for convergence.

In Algorithm~\ref{alg:adversarial_control}, we update $V^t$ by solving optimization $\min_V L_r(V, \alpha^{t-1}, \pi^{t-1})$. In fact, the $V$ function in the path-regularized Lagrangian $L_r(V, \alpha, \pi)$ plays two roles: {\bf i)}, inherited from the original Lagrangian, the first two terms in regularized Lagrangian~\eq{eq:path_reg_lagrangian} push the $V$ towards the value function of the optimal policy with \emph{on-policy} samples; {\bf ii)}, on the other hand,  the path regularization enforces $V$ to be close to the value function of behavior policy $\pi_b$ with \emph{off-policy} samples. Therefore, the $V$ function in the \algabb~algorithm can be understood as an interpolation between these two value functions learned from both on and off policy samples.

\subsection{Practical Implementation}\label{subsection:practical_alg}

In above, we have introduced path regularization for recovering local duality property of the parametrized multi-step Lagrangian dual form and tailored stochastic mirror descent algorithm for optimizing the regularized dual function. Here, we present several strategies for practical computation considerations.

\paragraph{Update rule of $V^t$}. In each iteration, we need to solve 
$
V^t = \argmin_{\theta_V} L_r(V, \alpha^{t-1}, \pi^{t-1}),
$
which depends on $\pi_b$ and $\eta_V$, for estimating the gradient for dual variables. In fact, the closer $\pi_b$ to $\pi^*$ is, the smaller $\EE_{s\sim\mu(s)}\sbr{\rbr{\EE^{\pi_b}\sbr{\sum_{i=0}^\infty \gamma^{i} R(s_i, a_i) } - V^*(s)}^2}$ will be. Therefore, we can set $\eta_V$ to be large for better local convexity and  faster convergence. Intuitively, the $\pi^{t-1}$ is approaching to $\pi^*$ as the algorithm iterates. Therefore, we can exploit the policy obtained in previous iteration, \ie, $\pi^{t-1}$, as the behavior policy. The experience replay can also be used. 

Furthermore, notice the $L(V, \alpha^{t-1}, \pi^{t-1})$ is a expectation of functions of $V$, we will use stochastic gradient descent algorithm for the subproblem. Other efficient optimization algorithms can be used too. Specifically, the unbiased gradient estimator for $\nabla_{\theta_V} L(V, \alpha^{t-1}, \pi^{t-1}) $ is
\begin{eqnarray}
\textstyle
\nabla_{\theta_V} L_r(V, \alpha^{t-1}, \pi^{t-1}) &=& (1 - \gamma^{k+1}) \EE_{\mu}\sbr{\nabla_{\theta_V}V(s)} + \EE_{\alpha}^\pi\sbr{\nabla_{\theta_V}\delta\rbr{\cbr{s_i, a_i}_{i=0}^k, s_{k+1}}}\\
&&\textstyle - 2\eta_V \EE_{\mu}^{\pi_b}\sbr{\rbr{\sum_{i=0}^\infty \gamma^{i} R(s_i, a_i)  - V(s)}\nabla_{\theta_V} V(s)}.\nonumber
\end{eqnarray}
We can use $k$-step Monte Carlo approximation for $\EE_{\mu}^{\pi_b}\sbr{\sum_{i=0}^\infty \gamma^{i} R(s_i, a_i)}$ in the gradient estimator. As $k$ is large enough, the truncate error is negligible~\citep{SutBar98}. We will iterate via $\theta_V^{t, i} = \theta_V^{t, i-1} + \kappa_i\widehat\nabla_{\theta^{t, i-1}_V} L_r(V, \alpha^{t-1}, \pi^{t-1})$ until the algorithm converges. 

It should be emphasized that in our algorithm, $V^t$ is not the estimation of the value function of $\pi^t$. Although $V^t$ eventually becomes the estimation of the optimal value function once the algorithm achieves the global optimum, in each update, the $V^t$ is one function which helps the current policy to be improved. From this perspective, the \algabb~bypasses the policy evaluation step.
\begin{algorithm}[tb]
\caption{\AlgName~(\algabb)}
  \begin{algorithmic}[1]\label{alg:adversarial_control}
    \STATE Initialize $\theta^0_V$, $\theta^0_\alpha$and $\theta^0_\pi$ randomly, set $\beta\in [\frac{1}{2}, 1]$.
    \FOR{episode $t=1,\ldots, T$}
        \STATE Start from $s\sim\alpha^{t-1}(s)$, collect samples $\cbr{\tau_l}_{l=1}^m$ follows behavior policy $\pi^{t-1}$. 
        \STATE Update $\theta^t_V = \argmin_{\theta_V} \hat L_r(V, \alpha^{t-1}, \pi^{t-1})$ by SGD based on $\cbr{\tau_l}_{l=1}^m$. 
        \STATE Update $\tilde\alpha^t(s)$ according to closed-form~\eq{eq:closed_alpha}.
        \STATE Decay the stepsize $\zeta_t$ in rate $\frac{C}{n_0 + 1/t^\beta}$.
        \STATE Compute the stochastic gradients for $\theta_\pi$ following~\eq{eq:reg_pi_grad}.
        \STATE Update $\theta^t_\pi$ according to the exact prox-mapping~\eq{eq:smd_prox_pi} or the approximate closed-form~\eq{eq:smd_prox_fisher}.
    \ENDFOR
  \end{algorithmic}
\end{algorithm}

\paragraph{Update rule of $\alpha^t$}. In practice, we may face with the situation that the initial sampling distribution is fixed, \eg, in MuJoCo tasks. Therefore, we cannot obtain samples from $\alpha^t(s)$ at each iteration. We assume that $\exists \eta_\mu \in (0, 1]$, such that $\alpha(s) = (1 - \eta_\mu)\beta(s) + \eta_\mu\mu(s)$ with $\beta(s)\in\Pcal(\Scal)$. Hence, we have
$$
\textstyle \EE_{\alpha}^\pi\sbr{\delta\rbr{\cbr{s_i, a_i}_{i=0}^k, s_{k+1}}} =  \EE_{\mu}^\pi\sbr{\rbr{\tilde\alpha(s)+ \eta_\mu}\delta\rbr{\cbr{s_i, a_i}_{i=0}^k, s_{k+1}}} 
$$
where $\tilde\alpha(s) = (1 - \eta_\mu)\frac{\beta(s)}{\mu(s)}$. Note that such an assumption is much weaker comparing with the requirement for popular policy gradient algorithms (\eg, \citet{SutMcaSinetal99,SilLevHeeetal14}) that assumes $\mu(s)$ to be a stationary distribution. In fact, we can obtain a closed-form update for $\tilde\alpha$ if a  square-norm regularization term is introduced into the dual function. Specifically,  
\begin{theorem}\label{thm:closed_alpha}
In $t$-th iteration, given $V^t$ and $\pi^{t-1}$,
\begin{eqnarray}\label{eq:closed_alpha}
&&\argmax_{\alpha\ge 0}\EE_{\mu(s)\pi^{t-1}(s)}\sbr{\rbr{\tilde\alpha(s)+ \eta_\mu}\delta\rbr{\cbr{s_i, a_i}_{i=0}^k, s_{k+1}}} - \eta_\alpha\nbr{\tilde\alpha}^2_{\mu}\\
&=&\frac{1}{\eta_\alpha}\max\rbr{0,\EE^{\pi^{t-1}}\sbr{\delta\rbr{\cbr{s_i, a_i}_{i=0}^k, s_{k+1}}}}.
\end{eqnarray}
\end{theorem}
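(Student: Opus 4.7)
The plan is to reduce this constrained functional maximization to a pointwise one-dimensional quadratic, exploiting the fact that both the objective and the constraint are separable in $\tilde\alpha(s)$.

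First, I would use the tower property to peel off the outer $s$-integration. Writing $g(s)\defeq\EE^{\pi^{t-1}}\sbr{\delta\rbr{\cbr{s_i,a_i}_{i=0}^k, s_{k+1}}\mid s_0=s}$, the first term of the objective becomes $\int \mu(s)\rbr{\tilde\alpha(s)+\eta_\mu} g(s)\,ds$, and with $\nbr{\tilde\alpha}_\mu^2=\int\mu(s)\tilde\alpha(s)^2\,ds$ the regularizer takes the same integral form. The $\eta_\mu g(s)$ piece does not depend on $\tilde\alpha$, so it drops out of the argmax. Simultaneously, under the decomposition $\alpha=(1-\eta_\mu)\beta+\eta_\mu\mu$ with $\beta\in\Pcal(\Scal)$ and $\mu(s)\ge 0$, the constraint $\alpha\ge 0$ is equivalent to $\tilde\alpha(s)\ge 0$ pointwise $\mu$-a.e.

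Next, once the objective is written as $\int\mu(s)\bigl[\tilde\alpha(s)g(s)-\eta_\alpha\tilde\alpha(s)^2\bigr]ds$ with the pointwise constraint $\tilde\alpha(s)\ge 0$, it is fully separable across states. Thus the variational problem collapses, at each $s$ with $\mu(s)>0$, to the scalar concave quadratic $\max_{x\ge 0}\,x\,g(s)-\eta_\alpha x^2$. This I would solve by a one-line KKT argument: if $g(s)\le 0$ the objective is non-increasing on $[0,\infty)$, giving maximizer $x=0$; otherwise the unique interior stationary point is positive and optimal. Combining both cases yields $\tilde\alpha(s)\propto \max\rbr{0,g(s)}$, matching the stated closed form $\tilde\alpha(s)=\tfrac{1}{\eta_\alpha}\max\rbr{0,\EE^{\pi^{t-1}}\sbr{\delta}}$ up to an absorption of the natural factor of $1/2$ from $\tfrac{d}{dx}(\eta_\alpha x^2)$ into the normalization convention for $\eta_\alpha$ (or $\nbr{\cdot}_\mu$).

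There is no deep obstacle here; the argument is essentially elementary convex analysis. The only items to handle carefully are (i) the bookkeeping of the nested expectation, so that the outer $\mu$-integration is cleanly isolated and the inner conditional expectation given $s_0=s$ produces the single scalar $g(s)$ that governs the pointwise problem, and (ii) checking that the constraint set $\{\alpha\ge 0\}$ really does translate, under the mixture parametrization $\alpha=(1-\eta_\mu)\beta+\eta_\mu\mu$, to the pointwise constraint $\tilde\alpha(s)\ge 0$ used above. Once those are in place, strict concavity of the quadratic gives uniqueness and the closed form is immediate.
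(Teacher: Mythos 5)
Your proposal is correct and follows essentially the same route as the paper: both reduce the problem to a pointwise concave quadratic in $\tilde\alpha(s)$ under the constraint $\tilde\alpha(s)\ge 0$ and solve it by a KKT/complementary-slackness argument, yielding $\tilde\alpha(s)\propto\max\rbr{0,\EE^{\pi^{t-1}}\sbr{\delta}}$. Your observation about the factor of $\tfrac12$ from differentiating $\eta_\alpha\tilde\alpha^2$ is apt --- the paper's stationarity condition $\eta_\alpha\tilde\alpha=\tau+\EE^{\pi}\sbr{\delta}$ silently drops this factor, so your explicit remark that it must be absorbed into the convention for $\eta_\alpha$ is if anything more careful than the original.
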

Then, we can update $\tilde\alpha^t$ through~\eq{eq:closed_alpha} with Monte Carlo approximation of $\EE^{\pi^{t-1}}\sbr{\delta\rbr{\cbr{s_i, a_i}_{i=0}^k, s_{k+1}}}$, avoiding the parametrization of $\tilde\alpha$. As we can see, the $\tilde\alpha^t(s)$ reweights the samples based on the temporal differences and this offers a principled justification for the heuristic prioritized reweighting trick used in~\citep{SchQuaAntSil15}.

\paragraph{Update rule of $\theta_\pi^t$}. The parameters for dual function, $\theta_\rho$, are updated by the prox-mapping operator~\eq{eq:smd_prox} following the stochastic mirror descent algorithm for the regularized dual function. Specifically, in $t$-th iteration, given $V^t$ and $\alpha^t$, for $\theta_\pi$, the prox-mapping~\eq{eq:smd_prox} reduces to
\begin{equation}\label{eq:smd_prox_pi}
\textstyle
\theta_\pi^t = \argmin_{\theta_\pi} - {\theta_\pi}^\top{\hat g^t_\pi} + \frac{1}{\zeta_t}KL\rbr{\pi_{\theta_\pi}(a|s)||\pi_{\theta_\pi^{t-1}}(a|s)},
\end{equation}
where $\hat g^t_\pi = \widehat\nabla_{\theta_\pi}\ell_r\rbr{\theta^{t}_\alpha, \theta^{t}_\pi}$. Then, the update rule will become exactly the natural policy gradient~\citep{Kakade02} with a principled way to compute the ``policy gradient'' $\hat g^t_\pi$. This can be understood as the penalty version of the trust region policy optimization~\citep{SchLevAbbJoretal15}, in which the policy parameters conservative update in terms of $KL$-divergence is achieved by adding explicit constraints.

Exactly solving the prox-mapping for $\theta_\pi$ requires another optimization, which may be expensive. To further accelerate the prox-mapping, we approximate the KL-divergence with the second-order Taylor expansion, and obtain an approximate closed-form update given by 
\begin{eqnarray}\label{eq:smd_prox_fisher}
\textstyle
\theta_\pi^{t} &\approx& \argmin_{\theta_\pi}  \left\{ - {\theta_\pi}^\top{\hat g^t_\pi} + \frac{1}{2}\nbr{\theta_\pi - \theta^{t-1}_\pi}_{F_t}^2 \right\} = \theta_\pi^{t-1} + \zeta_t F_t^{-1}\hat g^t_\pi
\end{eqnarray}
where $F_t \defeq \EE_{\alpha^{t}\pi^{t-1}}\sbr{\nabla^2\log\pi_{\theta_{\pi}^{t-1}}}$ denotes the Fisher information matrix. Empirically, we may normalize the gradient by its norm $\sqrt{g^t_\pi F_t^{-1}g^t_\pi}$~\citep{RajLowTodKak17} for better performances. 

Combining these practical tricks to the stochastic mirror descent update eventually gives rise to the \algname algorithm outlined in Algorithm~\ref{alg:adversarial_control}.

\section{Related Work}\label{sec:related_work}

The \algname algorithm  includes both the learning of \emph{optimal} value function and \emph{optimal} policy in a \emph{unified} framework based on the duality of the linear programming~(LP) representation of Bellman optimality equation. The linear programming representation of Bellman optimality equation and its duality have been utilized for (approximate) planning problem~\citep{FarRoy04,WanLizBowSch08,PazPar11, ODonWanBoy11, MalAbbBar14, Cogill15}, in which the transition probability of the MDP is known and the value function or policy are in tabular form. \citet{ChenWang16,Wang17} apply stochastic first-order algorithms~\citep{NemJudLanSha09} for the one-step Lagrangian of the LP problem in reinforcement learning setting. However, as we discussed in Section~\ref{sec:instability}, their algorithm is restricted to tabular parametrization and are not applicable to MDPs with large or continuous state/action spaces. 
 
The duality view has also been exploited in~\citet{NeuJonGom17}. Their algorithm is based on the duality of  entropy-regularized Bellman equation~\citep{Todorov07,RubShaTis12,FoxPakTis15, HaaTanAbbLev17,NacNorXuSch17}, rather than the exact Bellman optimality equation used in our work. Meanwhile, their algorithm is only derived and tested in tabular form. 

Our \algname algorithm can be understood as a nontrivial extension of the (approximate) dual gradient method~\citep[Chapter 6.3]{Bertsekas99} using stochastic gradient and Bregman divergence,  which essentially parallels the view of (approximate) stochastic mirror descent algorithm~\citep{NemJudLanSha09} in the primal space. As a result, the algorithm converges with diminishing stepsizes and decaying errors from solving subproblems. 

Particularly, the update rules of $\alpha$ and $\pi$ in the \algname are related to several existing algorithms. As we see in the update of $\alpha$, the algorithm reweighs the samples which are not fitted well. This is related to the heuristic prioritized experience replay~\citep{SchQuaAntSil15}. For the update in $\pi$, the proposed algorithm bears some similarities with trust region poicy gradient (TRPO)~\citep{SchLevAbbJoretal15} and natural policy gradient~\citep{Kakade02, RajLowTodKak17}. Indeed,  TRPO and NPR solve the same prox-mapping but are derived from different perspectives. We emphasize that although the updating rules share some resemblance to several reinforcement learning algorithms in the literature, they are purely originated from a stochastic dual ascent algorithm for solving the two-play game derived from Bellman optimality equation.

\section{Experiments}\label{sec:experiment}

We evaluated the \algname~(\algabb) algorithm on several continuous control environments from the OpenAI Gym~\citep{BroChePetSchetal16} with MuJoCo physics simulator~\citep{TodEreTas12}. We compared \algabb~with several representative actor-critic algorithms, including trust region policy optimization~(TRPO)~\citep{SchLevAbbJoretal15} and proximal policy optimization (PPO)~\citep{SchWolDhaRadetal17}\footnote{As discussed in~\citet{HenIslBacPinetal17}, different implementations of TRPO and PPO can provide different performances. For a fair comparison, we use the codes from \url{https://github.com/joschu/modular\_rl}  reported to have achieved the best scores in~\citet{HenIslBacPinetal17}.}. We ran the algorithms with $5$ random seeds and reported the average rewards with $50\%$ confidence interval. Details of the tasks and  setups of these experiments including the policy/value function architectures and  the hyperparameters values, are provided in Appendix~\ref{appendix:exp_details}.

\subsection{Ablation Study}\label{subsection:exp_ablation}

To justify our analysis in identifying the sources of instability in directly optimizing the parametrized one-step Lagrangian duality and the effect of the corresponding components in the \algname algorithm, we perform comprehensive Ablation study in InvertedDoublePendulum-v1, Swimmer-v1, and Hopper-v1 environments. We also considered the effect of $k = \{10, 50\}$ besides the one-step result in the study to demonstrate the benefits of multi-step.

We conducted comparison between the \algabb~and its variants, including \algabb~w/o multi-step, \algabb~w/o path-regularization, \algabb~w/o unbiased $V$, and the naive \algabb, for demonstrating the three instability sources in Section~\ref{sec:instability}, respectively, as well as varying the $k=\{10, 50\}$ in \algabb. Specifically, \algabb~w/o path-regularization removes the path-regularization components; \algabb~w/o multi-step removes the multi-step extension and the path-regularization; \algabb ~w/o unbiased $V$ calculates the stochastic gradient without achieving the convergence of inner optimization on $V$; and the naive \algabb~is the one without all components. Moreover, \algabb~with $k=10$ and \algabb~with $k=50$ denote the length of steps set to be $10$ and $50$, respectively.

\begin{figure*}[!t]
\vspace{-1mm}
\centering
  \begin{tabular}{ccc}
  \hspace{-6mm}
    \includegraphics[width=0.335\textwidth]{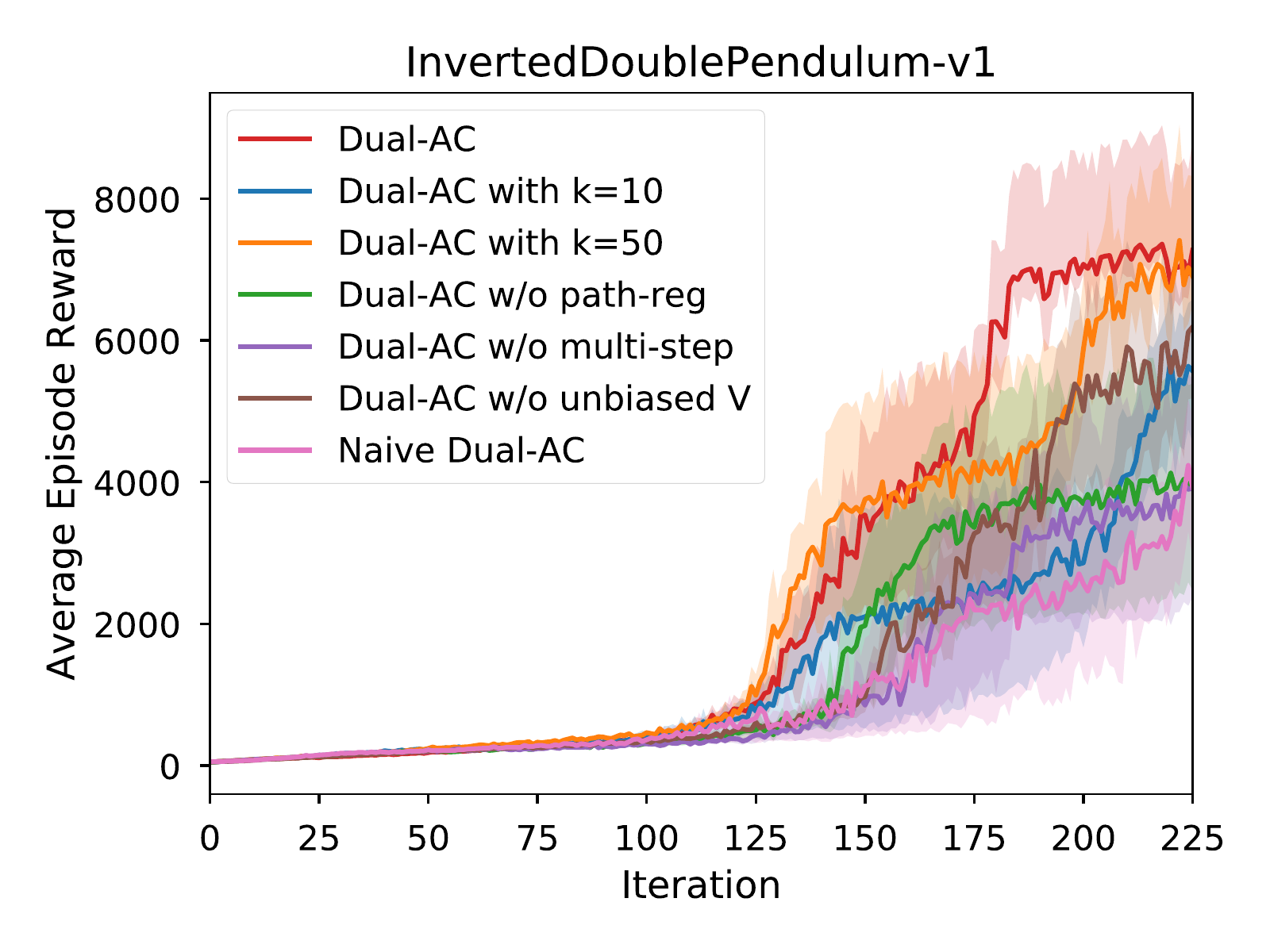}&\hspace{-4mm}
    \includegraphics[width=0.335\textwidth]{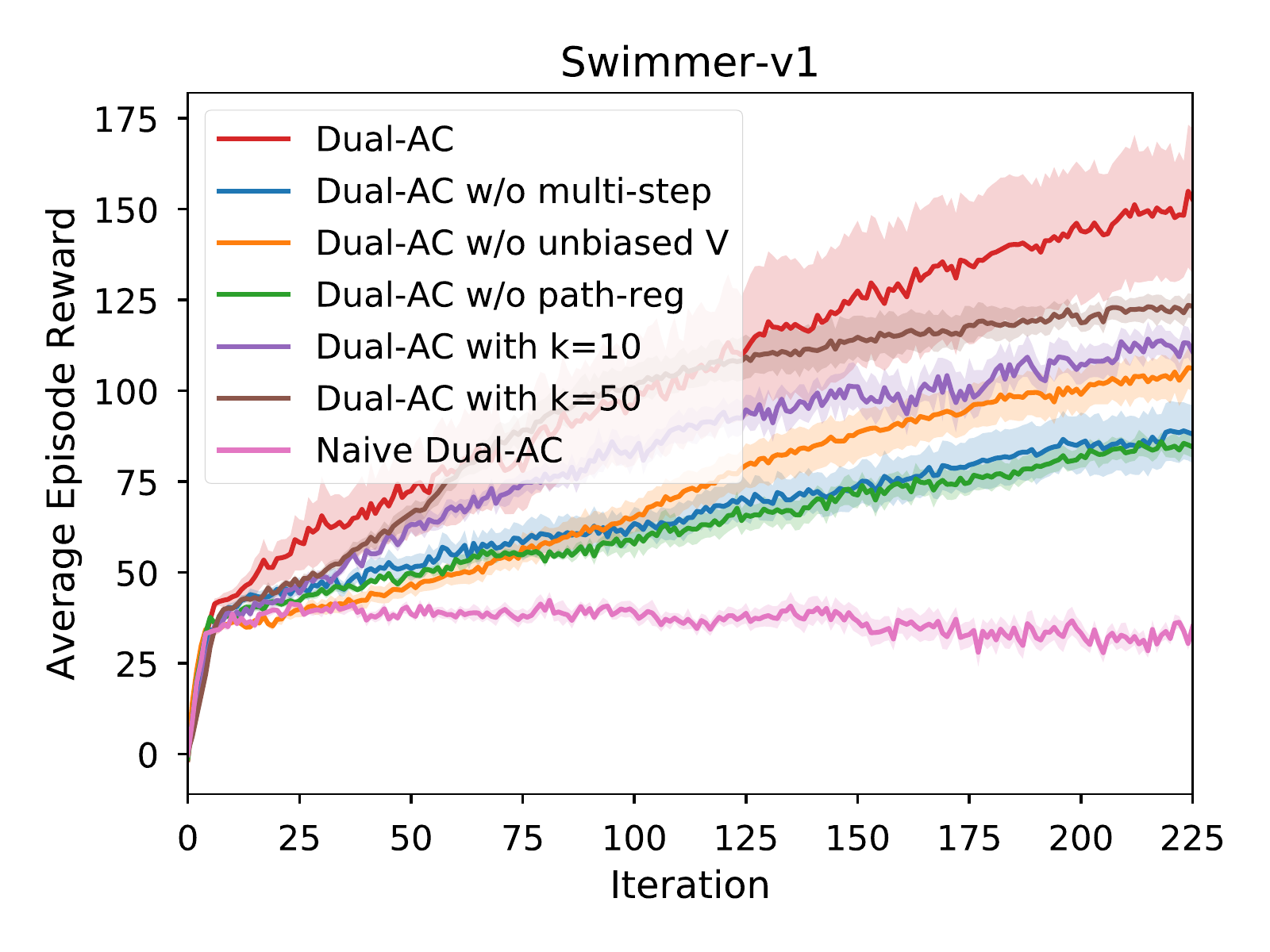}&\hspace{-4mm}
    \includegraphics[width=0.335\textwidth]{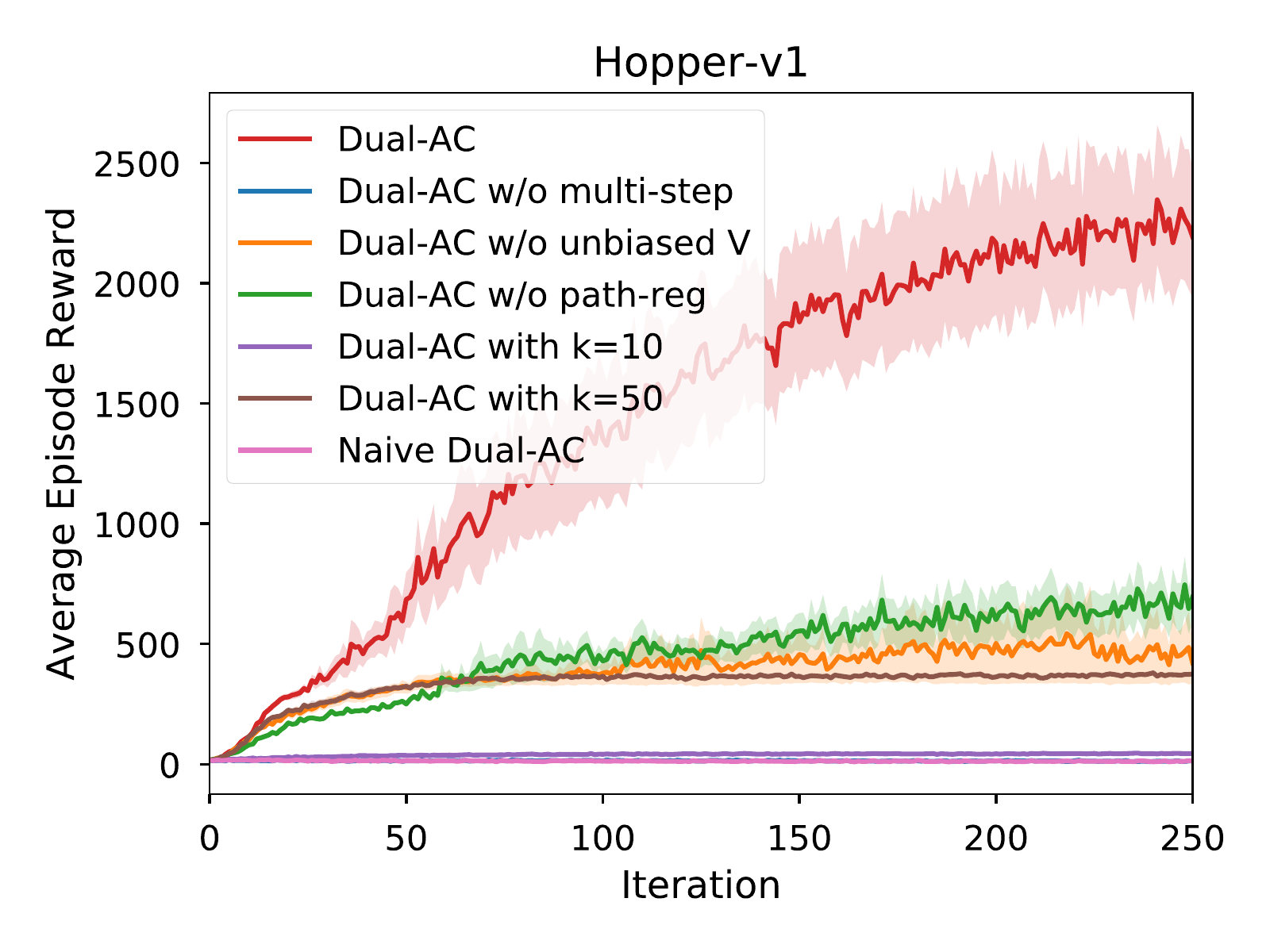} \\
    (a) InvertedDoublePendulum-v1 & (b) Swimmer-v1 &(c) Hopper-v1\\
  \end{tabular}
  \caption{Comparison between the \algabb~and its variants for justifying the analysis of the source of instability.}
  \label{fig:ablation_study}
\end{figure*}
The empirical performances on InvertedDoublePendulum-v1, Swimmer-v1, and Hopper-v1 tasks are shown in Figure~\ref{fig:ablation_study}. The results are consistent across the tasks with the analysis. The naive \algabb~performs the worst. The performances of the \algabb~found the optimal policy which solves the problem much faster than the alternative variants. The \algabb~w/o unbiased $V$ converges slower, showing its sample inefficiency caused by the bias in gradient calculation. The \algabb~w/o multi-step and \algabb~w/o path-regularization cannot converge to the optimal policy, indicating the importance of the path-regularization in recovering the local duality. Meanwhile, the performance of \algabb~w/o multi-step is worse than \algabb~w/o path-regularization, showing the bias in one-step can be alleviated via multi-step trajectories. The performances of \algabb~become better with the length of step $k$ increasing on these three tasks. We conjecture that the main reason may be that in these three MuJoCo environments, the bias dominates the variance. Therefore, with the $k$ increasing, the proposed \algabb~obtains more accumulate rewards.

\subsection{Comparison in Continuous Control Tasks}\label{subsection:exp_comparison}

In this section, we evaluated the \algabb~against TRPO and PPO across multiple tasks, including the InvertedDoublePendulum-v1, Hopper-v1, HalfCheetah-v1, Swimmer-v1 and Walker-v1. These tasks have different dynamic properties, ranging from unstable to stable, Therefore, they provide sufficient benchmarks for testing the algorithms.  In Figure~\ref{fig:policy_comparison}, we reported the average rewards across $5$ runs of each algorithm with $50\%$ confidence interval during the training stage. We also reported the average final rewards in Table~\ref{table:final_reward}.
\begin{table}[htb]
\caption{The average final performances of the policies learned from \algabb~and the competitors.}
\vspace{-2mm}
\label{table:final_reward}
\begin{center}
\begin{tabular}{c|c|c|c}
\hline
\
Environment &\algabb          &PPO     &TRPO     \\
\hline
Pendulum &$\bf{-155.45}$     &$-266.98$ &$-245.11$\\
InvertedDoublePendulum &$\bf{8599.47}$ &$1776.26$ &$3070.96$\\
Swimmer &${234.56}$  &${223.13}$ &${232.89}$\\
Hopper &$\bf{2983.79}$ &${2376.15}$ &$2483.57$\\
HalfCheetah &$\bf{3041.47}$  &$2249.10$ &$2347.19$\\
Walker &$\bf{4103.60}$  &$3315.45$ &$2838.99$\\
\hline
\end{tabular}
\vspace{-2mm}
\end{center}
\end{table}

The proposed \algabb~achieves the best performance in almost all environments, including Pendulum, InvertedDoublePendulum, Hopper, HalfCheetah and Walker. These results demonstrate that \algabb~is a viable and competitive RL algorithm for a wide spectrum of RL tasks with different dynamic properties. 

A notable case is the InvertedDoublePendulum, where  \algabb~substantially outperforms TRPO and PPO in terms of the learning speed and sample efficiency, implying that \algabb~is preferable to unstable dynamics. We conjecture this advantage might come from the different meaning of $V$ in our algorithm. For unstable system, the failure will happen frequently, resulting the collected data are far away from the optimal trajectories. Therefore, the policy improvement through the value function corresponding to current policy is slower, while  our algorithm learns the optimal value function and enhances the sample efficiency.

\begin{figure*}[!t]
\vspace{-1mm}
\centering
  \begin{tabular}{ccc}
    \hspace{-6mm}
    \includegraphics[width=0.333\textwidth]{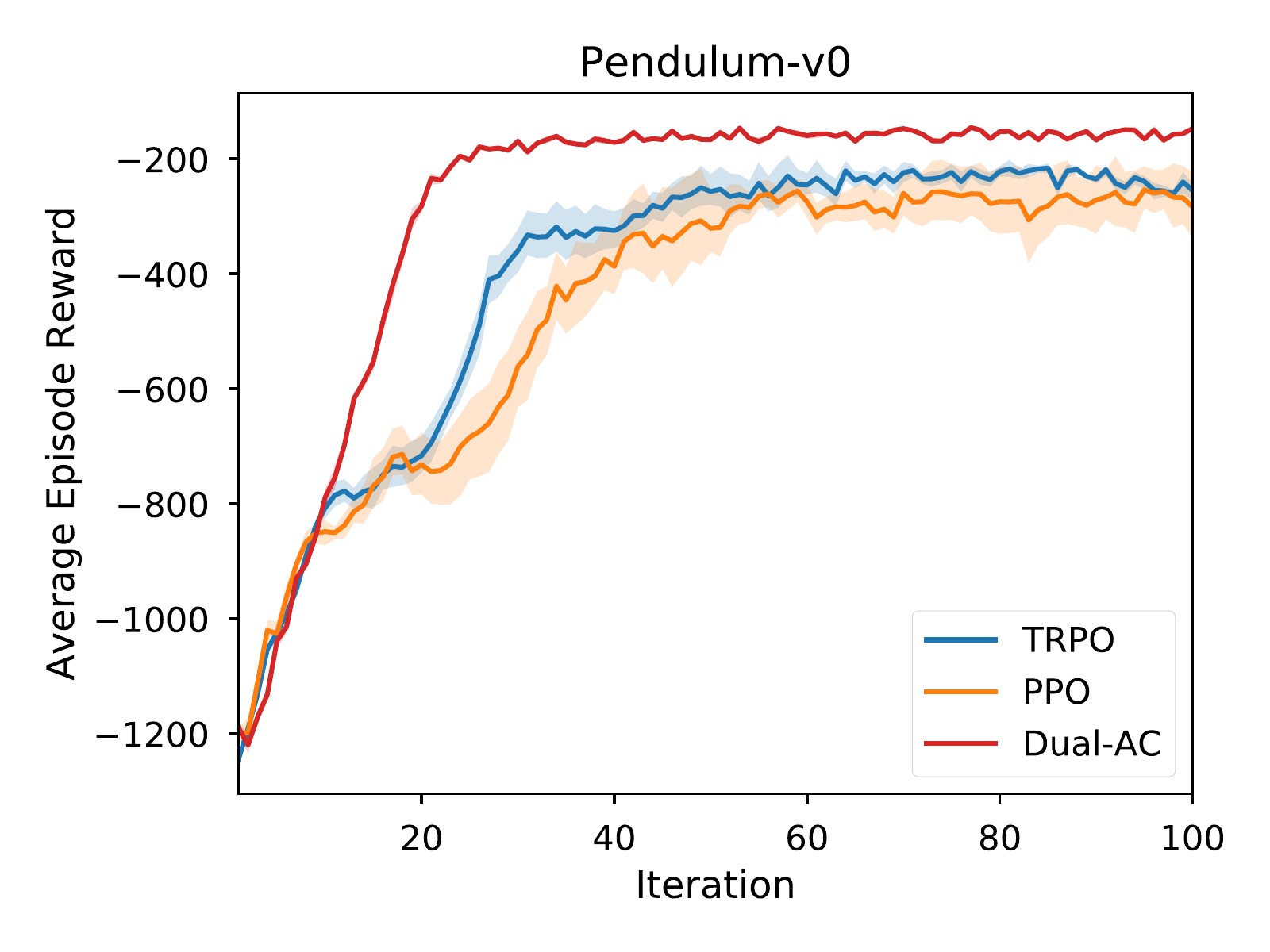}&\hspace{-6mm}
    \includegraphics[width=0.333\textwidth]{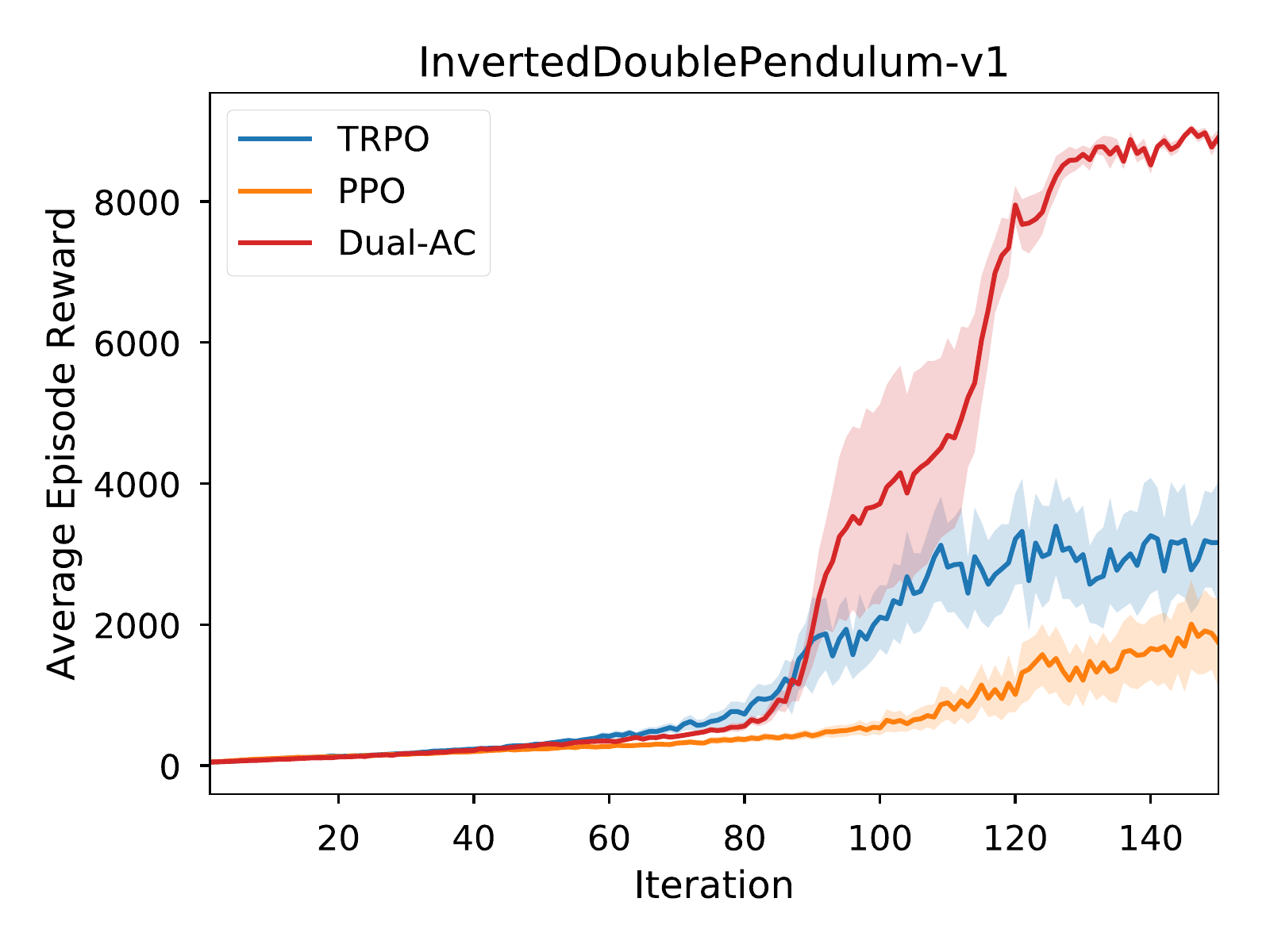}&\hspace{-6mm}
    \includegraphics[width=0.336\textwidth]{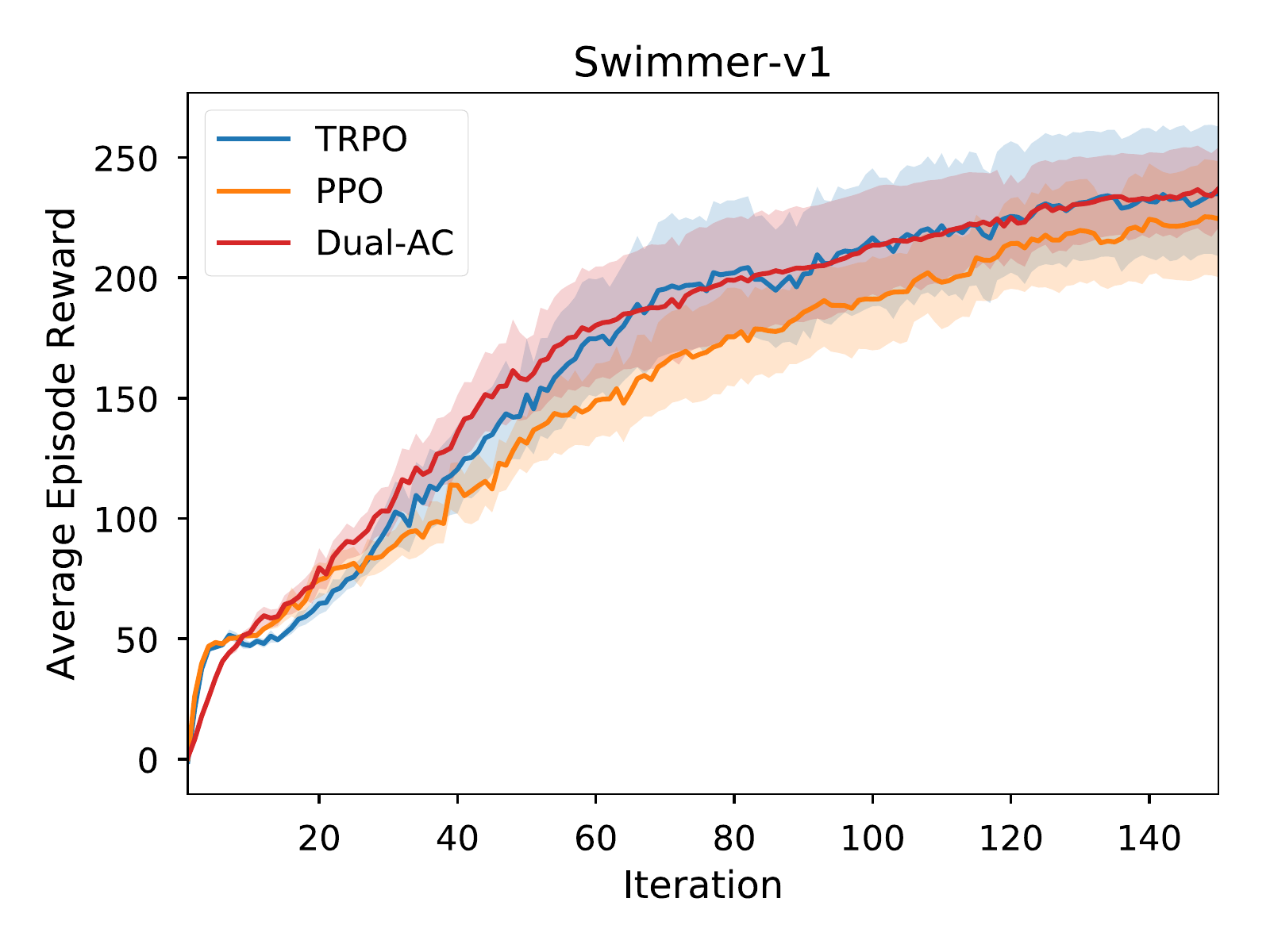}\\
    (a) Pendulum & (b) InvertedDoublePendulum-v1 & (c) Swimmer-v1 \\
  \end{tabular}
  \begin{tabular}{ccc}
  \hspace{-6mm}
    \includegraphics[width=0.335\textwidth]{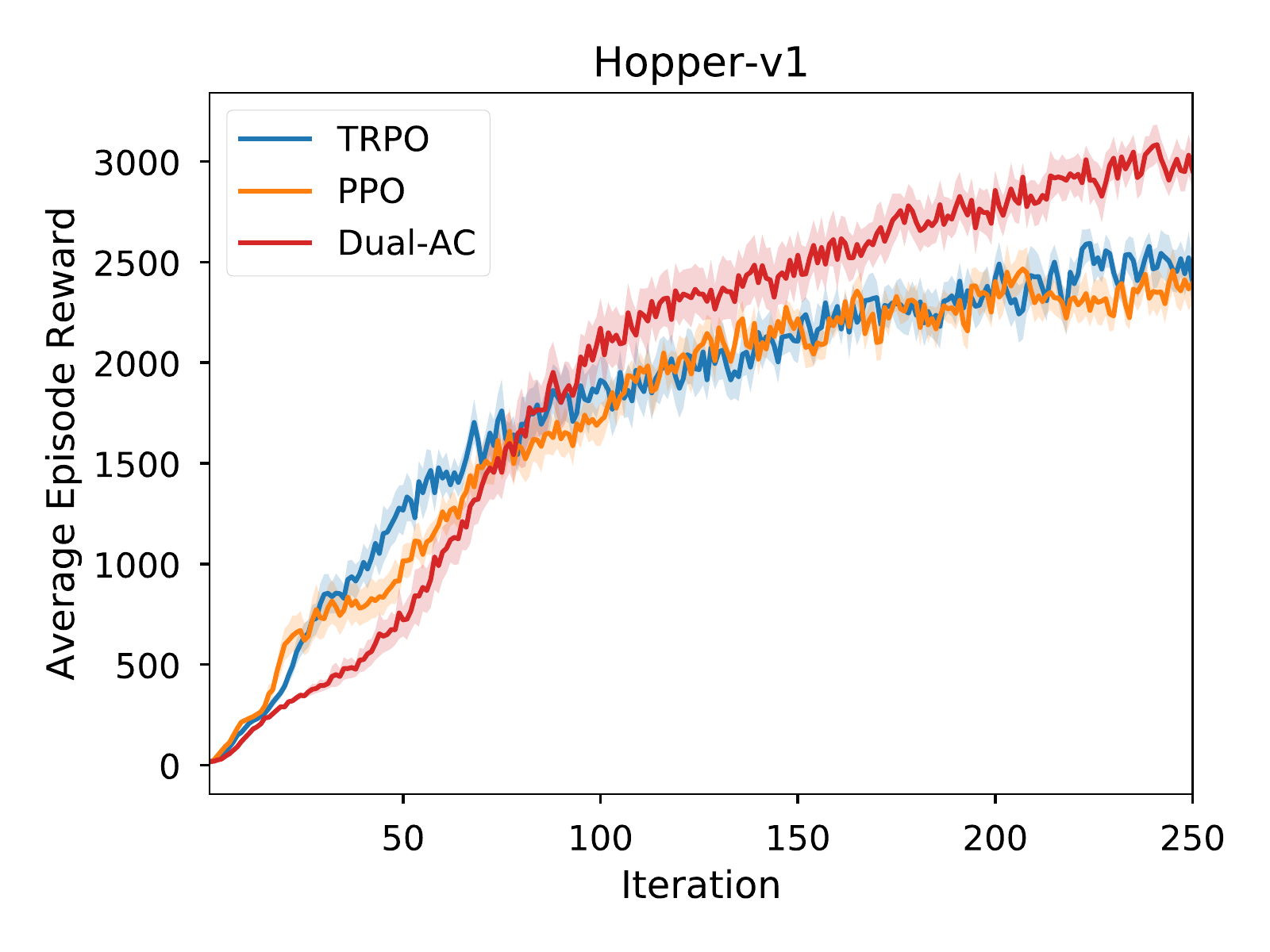}&\hspace{-4mm}
    \includegraphics[width=0.335\textwidth]{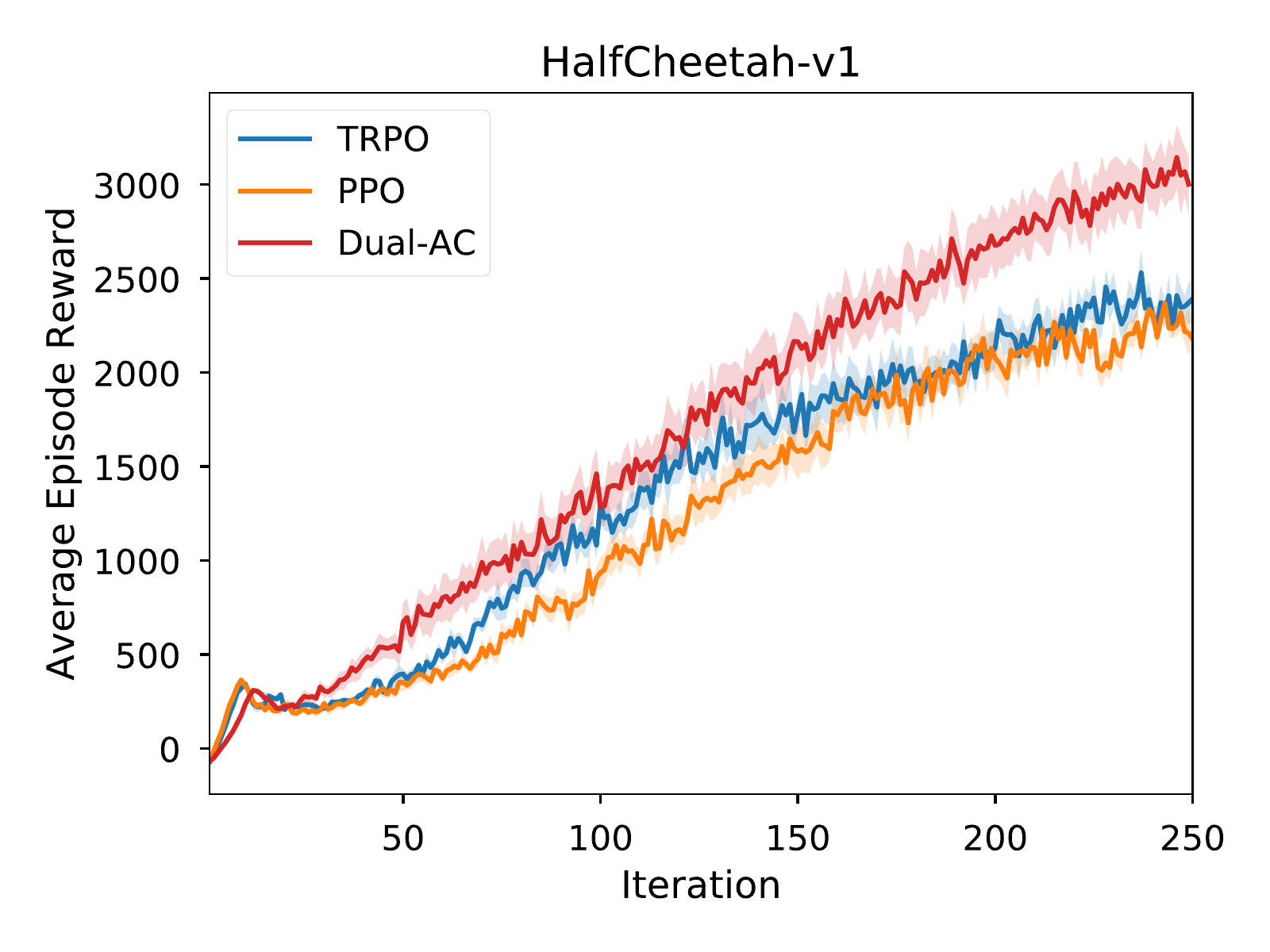}&\hspace{-4mm}
    \includegraphics[width=0.335\textwidth]{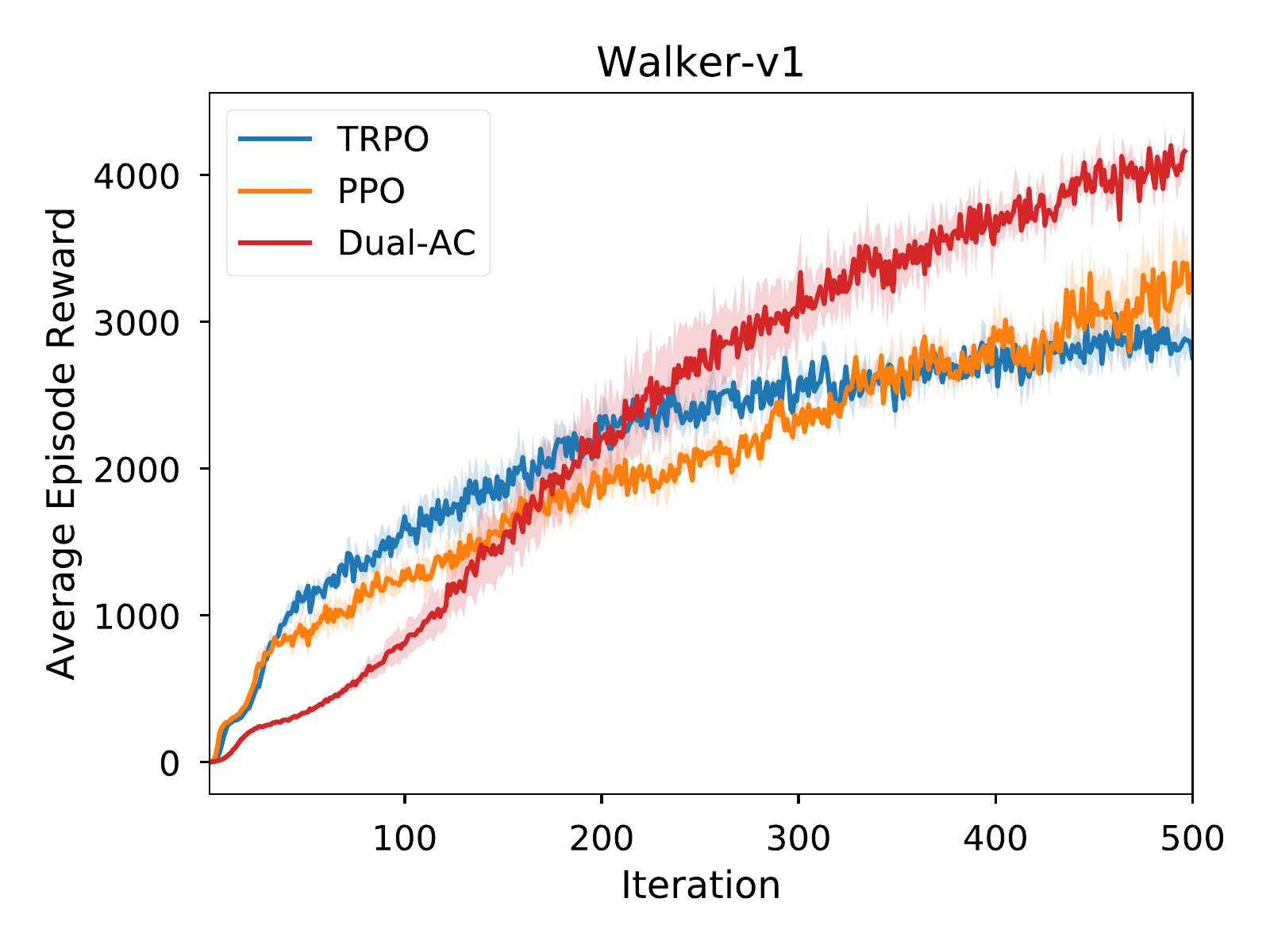}\\
    (d) Hopper-v1 &(e) HalfCheetah-v1 &(f) Walker-v1\\
  \end{tabular}
  \vspace{-1mm}
  \caption{The results of \algabb~against TRPO and PPO baselines. Each plot shows average reward during training across $5$ random seeded runs, with $50\%$ confidence interval. The x-axis is the number of training iterations. The \algabb~achieves comparable performances comparing with TRPO and PPO in some tasks, but outperforms on more challenging tasks. }
  \label{fig:policy_comparison}
\end{figure*}

\section{Conclusion}\label{sec:conclusion}

In this paper, we revisited the linear program formulation of the Bellman optimality equation, whose Lagrangian dual form yields a game-theoretic view for the roles of the actor and the dual critic. Although such a framework for actor and dual critic allows them to be optimized for the same objective function, parametering the actor and dual critic unfortunately induces instablity in optimization. We analyze the sources of instability, which is corroborated by numerical experiments.  We then propose \emph{\AlgName}, which exploits \emph{stochastic dual ascent} algorithm for the \emph{path regularized}, \emph{multi-step bootstrapping} two-player game, to bypass these issues. The algorithm achieves the state-of-the-art performances on several MuJoCo benchmarks. 


\clearpage
\newpage

\appendix
\onecolumn

\begin{appendix}

\thispagestyle{plain}
\begin{center}
{\Large \bf Appendix}
\end{center}

\section{Details of the Proofs for Section~\ref{sec:dual_bellman}}\label{appendix:dual_proofs}

\subsection{Duality of Bellman Optimality Equation}

\citet{Puterman14,Bertsekas95b} provide details in deriving the linear programming form of the Bellman optimality equation. We provide a briefly proof here. 
\begin{proof}
We rewrite the linear programming~\ref{eq:bellman_lp_primal} as
\begin{equation}\label{eq:V_min}
V^* = \argmin_{V \ge \Tcal V} \EE_{\mu}\sbr{V(s)}.
\end{equation}
Recall the $\Tcal$ is monotonic, \ie, if $V \ge \Tcal V\Rightarrow \Tcal V \ge \Tcal^2 V$ and $V^* = \Tcal^\infty V$ for arbitrary $V$, we have for $\forall V$ feasible, $V\ge \Tcal V \ge \Tcal^2 V\ge \ldots\ge \Tcal^\infty V = V^*$.
\end{proof}

\noindent{\bf Theorem~\ref{thm:dual_property} (Optimal policy from occupancy)} 
\emph{
$\sum_{s, a\in\Scal\times\Acal}\rho^*(s, a)= 1$, and $\pi^*(a|s) = \frac{\rho^*(s, a)}{\sum_{a\in\Acal} \rho^*(s, a)}$.
}

\begin{proof}
For the optimal occupancy measure, it must satisfy
\begin{eqnarray*}
&&\sum_{a\in\Acal}\rho^*(s', a) = \gamma\sum_{s, a\in\Scal\times\Acal}\rho^*(s, a)p(s'|s, a) + (1 - \gamma)\mu(s'),\quad \forall s'\in \Scal\\
&\Rightarrow& (1 -\gamma)\mu + \sum_{s, a\in\Scal\times\Acal}(\gamma P - I)\rho^*(s, a) = 0,
\end{eqnarray*}
where $P$ denotes the transition distribution and $I$ denotes a $\abr{\Scal}\times\abr{\Scal\Acal}$ matrix where $I_{ij} = 1$ if and only if $j \in [\rbr{i-1}\abr{\Acal}+1,\ldots, i\abr{\Acal}]$. Multiply both sides with $\one$, due to $\mu$ and $P$ are probabilities, we have $\langle \one, \rho^*\rangle = 1.$

Without loss of generality, we assume there is only one best action in each state. Therefore, by the KKT complementary conditions of~\eq{eq:bellman_lp_primal}, \ie,
$$
\rho(s, a)\rbr{{R(s, a)} +\gamma \EE_{s'|s, a}\sbr{V(s')} - V(s)} = 0,
$$
which implies $\rho^*(s, a)\neq 0$ if and only if $a = a^*$, therefore, the $\pi^*$ by normalization. 
\end{proof}

\noindent{\bf Theorem~\ref{thm:one_step_lagrangian}}
\emph{
The optimal policy $\pi^*$ and its corresponding value function $V^*$ is the solution to the following saddle problem
\begin{eqnarray*}
\max_{\alpha\in\Pcal(\Scal), \pi\in\Pcal(\Acal)}\min_{V} \,\, L(V, \alpha, \pi) := \rbr{1 - \gamma}\EE_{s\sim \mu(s)} \sbr{V(s)} + \sum_{\rbr{s, a}\in\Scal\times\Acal}\alpha(s)\pi\rbr{a|s}\Delta[V](s, a) 
\end{eqnarray*}
where $\Delta[V](s,a)= R(s,a) + \gamma\EE_{s'|s, a}[V(s')]-V(s)$. 
}

\begin{proof}
Due to the strong duality of the optimization~\eq{eq:bellman_lp_primal}, we have 
\begin{eqnarray*}
&&\min_{V}\max_{\rho(s, a)\ge 0} \,\, \rbr{1 - \gamma}\EE_{s\sim \mu(s)} \sbr{V(s)} + \sum_{\rbr{s, a}\in\Scal\times\Acal}\rho(s, a)\Delta[V](s, a)\\
&=&\max_{\rho(s, a)\ge 0}\min_{V} \,\, \rbr{1 - \gamma}\EE_{s\sim \mu(s)} \sbr{V(s)} + \sum_{\rbr{s, a}\in\Scal\times\Acal}\rho(s, a)\Delta[V](s, a).
\end{eqnarray*}
Then, plugging the property of the optimum in Theorem~\ref{thm:dual_property}, we achieve the final optimization~\eq{eq:one_step_lagrangian}.
\end{proof}

\subsection{Continuous State and Action MDP Extension}\label{appendix:continous_mdp}
In this section, we extend the linear programming and its duality to continuous state and action MDP. In general, the only weak duality holds for infinite constraints, \ie, $\mathsf{P^*}\ge \mathsf{D^*}$. With a mild assumption, we will recover the strong duality for continuous state and action MDP, and most of the conclusions in discrete state and action MDP still holds. 

Specifically, without loss of generality, we consider the solvable MDP, \ie, the optimal policy, $\pi^*(a|s)$, exists. If $\nbr{R(s, a)}_\infty\le C_R$, $\nbr{V^*}_\infty\le \frac{C_R}{1 - \gamma}$. Moreover, 
\begin{eqnarray*}
\nbr{V^*}_{2, \mu}^2 &=& \int \rbr{V^*(s)}^2\mu(s)ds = \int \rbr{R(s, a) + \gamma\EE_{s'|s, a}\sbr{V^*(s')}}^2 \pi^*(a|s)\mu(s)d(s, a)\\
&\le&2 \int \rbr{R(s, a)}^2\pi^*(a|s)\mu(s)ds + 2\gamma^2\int \rbr{\EE_{s'|s, a}\sbr{V^*(s')}}^2\pi^*(a|s)\mu(s)ds\\
&\le&2\max_{a\in\Acal}\nbr{R(s, a)}_\mu^2 + 2\gamma^2 \int\rbr{\int P^*(s'|s)\mu(s) ds} \rbr{V^*(s')}^2ds'\\
&\le&2\max_{a\in\Acal}\nbr{R(s, a)}_\mu^2 + 2\gamma^2 \nbr{V^*(s')}_\infty^2\int \int P^*(s'|s)\mu(s) dsds'\\
&\le&2\max_{a\in\Acal}\nbr{R(s, a)}_\mu^2 + 2\gamma^2 \nbr{V^*(s')}_\infty^2,
\end{eqnarray*}
where the first inequality comes from  $2\langle f(x), g(x) \rangle_2 \le \nbr{f}_2^2 + \nbr{g}_2^2$. 
\begin{eqnarray*}
\nbr{V^* - \gamma \EE_{s'|s, a}\sbr{V(s')}}_{\mu\pi_b}^2 \le 2\nbr{V^*}^2_\mu + 2\gamma^2\nbr{\EE_{s'|s, a}\sbr{V^*(s')}}_{\mu\pi_b}^2 \le  2\nbr{V^*}^2_\mu + 2\gamma^2  \nbr{V^*(s')}_\infty^2,
\end{eqnarray*}
for some $\pi_b\in \Pcal$ that $\pi_b(a|s) > 0$ for $\forall \rbr{s,a}\in\Scal\times\Acal$.
Therefore, with the assumption that $\nbr{R(s, a)}_\mu^2\le C^\mu_R, \forall a\in\Acal$, we have $R(s, a)\in\Lcal_{\mu\pi_b}^2\rbr{\Scal\times\Acal}$ and $V^*(s')\in \Lcal_\mu^2(\Scal)$. The constraints in the primal form of linear programming can be written as 
$$
\rbr{\Ical - \gamma \Pcal} V  - R\succeq_{\Lcal^2_{\mu\pi_b}} 0,
$$
where $\Ical-\gamma \Pcal :\Lcal^2_\mu(\Scal)\rightarrow \Lcal_{\mu\pi_b}^2(\Scal\times\Acal)$ without any effect on the optimality. For simplicity, we denote $\succeq$ as $\succeq_{\Lcal^2_{\mu\pi_b}}$ and $\langle f, g\rangle = \int f(s, a)g(s, a)\mu(s)\pi_b(a|s)dsda$. Apply the Lagrangian multiplier for constraints in ordered Banach space in~\cite{Burger03}, we have
\begin{eqnarray}\label{eq:continuous_lagrangian}
\mathsf{P^*} = \min_{V\in \Lcal}\max_{\varrho \succeq 0}\,\, (1 - \gamma)\EE_{\mu}\sbr{V(s)} - \langle \varrho,  \rbr{\Ical - \gamma \Pcal} V  - R\rangle.
\end{eqnarray}
The solution $(V^*, \varrho^*)$ also satisfies the KKT conditions, 
\begin{eqnarray}
(1 - \gamma)\one - \rbr{\Ical - \gamma \Pcal}^\top \varrho^* &=& 0,\\
\varrho^* &\succeq& 0, \\
\rbr{\Ical - \gamma \Pcal} V^* - R &\succeq& 0,\\
\langle \varrho^*,  \rbr{\Ical - \gamma \Pcal} V^*  - R\rangle & = &0.
\end{eqnarray}
where $^\top$ denotes the conjugate operation. By the KKT condition, we have 
\begin{equation}\label{eq:normalize_condition}
\inner{\one}{(1 - \gamma)\one - \rbr{\Ical - \gamma \Pcal}^\top \varrho^*} = 0\Rightarrow \inner{\one}{\varrho} = 1.
\end{equation}
The strongly duality also holds, \ie, 
\begin{eqnarray}
\mathsf{P^*} = \mathsf{D^*} := \max_{\varrho \succeq 0}&& \langle  R(s, a), \varrho(s, a) \rangle\\
\st &&  (1 - \gamma)\one - \rbr{\Ical - \gamma \Pcal}^\top \varrho = 0
\end{eqnarray}
\begin{proof}
We compute the duality gap 
\begin{eqnarray*}
&&(1 - \gamma) \langle \one, V^* \rangle - \langle R, \varrho^*\rangle\\
& = & \langle \varrho^*, \rbr{\Ical - \gamma \Pcal} V^*\rangle  - \langle R, \varrho^*\rangle\\
& = & \langle \varrho^*, \rbr{\Ical - \gamma \Pcal} V^* - R \rangle =0,
\end{eqnarray*}
which shows the strongly duality holds. 
\end{proof}

\section{Details of The Proofs for Section~\ref{sec:sac}}\label{appendix:sac}

\subsection{Competition in Multi-Step Setting}

Once we establish the $k$-step Bellman optimality equation~\eq{eq:multi_step_bellman}, it is easy to derive the $\lambda$-Bellman optimality equation, \ie, 
\begin{eqnarray}
V^*(s) = \max_{\pi\in\Pcal}\,\, (1 - \lambda)\sum_{k=0}^\infty \lambda^k \EE^\pi\sbr{\sum_{i=0}^k \gamma^i R(s_i, a_i) +\gamma^{k+1} V^*(s_{k+1})}:= (\Tcal_\lambda V^*)(s).
\end{eqnarray}
\begin{proof}
Denote the optimal policy as $\pi^*(a|s)$, we have
$$
V^*(s) = \EE^{\pi^*}_{\cbr{s_t}_{i=0}|s}\sbr{\sum_{i=0}^k \gamma^i R(s_i, a_i)} +\gamma^{k+1} \EE^{\pi^*}_{s_{k+1}|s}\sbr{V^*(s_{k+1})},
$$
holds for arbitrary $\forall k\in\NN$. Then, we conduct $k\sim \Gcal eo(\lambda)$ and take expectation over the countable infinite many equation, resulting
\begin{eqnarray*}
V^*(s) &=& (1 - \lambda)\sum_{k=0}^\infty \lambda^k \EE^{\pi^*}\sbr{\sum_{i=0}^k \gamma^i R(s_i, a_i) +\gamma^{k+1} V^*(s_{k+1})}\\
&=& \max_{\pi\in\Pcal} \,\,(1 - \lambda)\sum_{k=0}^\infty \lambda^k \EE^\pi\sbr{\sum_{i=0}^k \gamma^i R(s_i, a_i) +\gamma^{k+1} V^*(s_{k+1})}
\end{eqnarray*}
\end{proof}

Next, we investigate the equivalent optimization form of the $k$-step and $\lambda$-Bellman optimality equation, which requires the following monotonic property of $\Tcal_k$ and $\Tcal_\lambda$.
\begin{lemma}\label{lemma:monotonic_prop}
Both $\Tcal_k$ and $\Tcal_\lambda$ are monotonic.
\end{lemma}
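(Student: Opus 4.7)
The plan is to reduce monotonicity of $\Tcal_k$ and $\Tcal_\lambda$ to the elementary fact that if $f_1(\pi)\le f_2(\pi)$ for every policy $\pi$, then $\sup_\pi f_1(\pi)\le \sup_\pi f_2(\pi)$. Concretely, I would fix two value functions with $V_1(s)\le V_2(s)$ for all $s\in\Scal$ and show pointwise $(\Tcal_k V_1)(s)\le(\Tcal_k V_2)(s)$, then leverage the same inequality inside the $\lambda$-mixture to get the analogous statement for $\Tcal_\lambda$.

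First I would write, for a fixed $\pi\in\Pcal$ and fixed $s\in\Scal$,
\begin{equation*}
F^\pi_k[V](s)\defeq \EE^\pi\!\sbr{\textstyle\sum_{i=0}^{k}\gamma^i R(s_i,a_i)\,\big|\,s_0=s}+\gamma^{k+1}\EE^\pi\!\sbr{V(s_{k+1})\,\big|\,s_0=s},
\end{equation*}
and observe that only the second summand depends on $V$. Since $\gamma^{k+1}>0$ and $V_1\le V_2$ pointwise, the expectation $\EE^\pi[V_1(s_{k+1})|s_0=s]\le \EE^\pi[V_2(s_{k+1})|s_0=s]$ (expectation preserves pointwise inequalities), so $F^\pi_k[V_1](s)\le F^\pi_k[V_2](s)$ for every $\pi$ and every $s$. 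Taking supremum over $\pi\in\Pcal$ on both sides then yields $(\Tcal_k V_1)(s)=\sup_\pi F^\pi_k[V_1](s)\le \sup_\pi F^\pi_k[V_2](s)=(\Tcal_k V_2)(s)$, which is the desired monotonicity of $\Tcal_k$.

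For $\Tcal_\lambda$ I would use the same pointwise domination: define
\begin{equation*}
G^\pi[V](s)\defeq (1-\lambda)\sum_{k=0}^\infty \lambda^k F^\pi_k[V](s).
\end{equation*}
Since each $F^\pi_k[\cdot](s)$ is monotone in $V$ and the coefficients $(1-\lambda)\lambda^k\ge 0$ form a convex combination, $V_1\le V_2$ pointwise implies $G^\pi[V_1](s)\le G^\pi[V_2](s)$ for every $\pi$ and $s$. Taking $\sup_\pi$ again preserves the inequality, giving $(\Tcal_\lambda V_1)(s)\le(\Tcal_\lambda V_2)(s)$. The only genuine subtlety I anticipate is a measurability/well-definedness check so that one can interchange the nonnegative sum with the supremum and the expectation; this is routine under boundedness of $R$ and $V$, which is the standing assumption in the paper. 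No convex-concave structure is needed for this lemma, so the argument should go through cleanly.
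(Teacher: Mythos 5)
Your proposal is correct and follows essentially the same route as the paper: isolate the terminal term $\gamma^{k+1}\EE^\pi[V(s_{k+1})]$ as the only $V$-dependent part, use that expectations preserve pointwise inequalities, and then take the supremum over $\pi$; the $\lambda$-case follows from nonnegativity of the geometric weights. If anything, your handling of $\Tcal_\lambda$ is slightly more careful than the paper's, which informally writes $\Tcal_\lambda=\EE_{k\sim \Gcal eo(\lambda)}[\Tcal_k]$ even though the $\max_\pi$ sits outside the geometric sum in the actual definition — your $G^\pi$ formulation respects that ordering.
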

\begin{proof}
Assume $U$ and $V$ are the value functions corresponding to $\pi_1$ and $\pi_2$, and $U\ge V$, \ie, $U(s)\ge V(s)$, $\forall s\in \Scal$, apply the operator $\Tcal_k$ on $U$ and $V$, we have
\begin{eqnarray*}
\rbr{\Tcal_k U}(s) &=& \max_{\pi\in\Pcal}\,\, \EE^\pi_{\{s_i\}_{i=1}^k|s}\sbr{\sum_{i=0}^k \gamma^i R(s_i, a_i)} +\gamma^{k+1}\EE^\pi_{s_{k+1}|s}\sbr{ U(s_{k+1})},\\
\rbr{\Tcal_k V}(s) &=& \max_{\pi\in\Pcal}\,\, \EE^\pi_{\{s_i\}_{i=1}^k|s}\sbr{\sum_{i=0}^k \gamma^i R(s_i, a_i)} +\gamma^{k+1}\EE^\pi_{s_{k+1}|s}\sbr{ V(s_{k+1})}.\\
\end{eqnarray*}
Due to $U\ge V$, we have $\EE^\pi_{s_{k+1}|s}\sbr{ U(s_{k+1})}\ge \EE^\pi_{s_{k+1}|s}\sbr{ V(s_{k+1})}$, $\forall \pi\in\Pcal$, which leads to the first conclusion, $\Tcal_k U \ge \Tcal_k V$.

Since $\Tcal_\lambda = (1 - \lambda)\sum_{k=1}^\infty\Tcal_k = \EE_{k\sim \Gcal eo(\lambda)}\sbr{\Tcal_k}$, therefore, $\Tcal_\lambda$ is also monotonic.
\end{proof}

With the monotonicity of $\Tcal_k$ and $\Tcal_\lambda$, we can rewrite the $V^*$ as the solution to an optimization,
\begin{theorem}\label{thm:lp_bellman}
The optimal value function $V^*$ is the solution to the optimization
\begin{eqnarray}\label{eq:multi_step_optimization_form}
V^* = \argmin_{V\ge \Tcal_k V} \,\, \rbr{1 - \gamma^{k+1}} \EE_{s\sim \mu(s)}\sbr{V(s)},
\end{eqnarray}
where $\mu(s)$ is an arbitrary distribution over $\Scal$.
\end{theorem}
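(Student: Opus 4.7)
The plan is to mimic the one-step argument used for~\eqref{eq:V_min}, replacing the Bellman operator $\Tcal$ by $\Tcal_k$ and relying on Lemma~\ref{lemma:monotonic_prop}. There are two things to show: (i) $V^*$ is feasible for the constraint $V\ge \Tcal_k V$, and (ii) every feasible $V$ dominates $V^*$ pointwise, so that the positively-weighted objective $(1-\gamma^{k+1})\EE_\mu[V(s)]$ attains its minimum exactly at $V^*$, regardless of the choice of $\mu$.

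For (i), feasibility is immediate: the $k$-step Bellman optimality equation~\eqref{eq:multi_step_bellman} says $V^* = \Tcal_k V^*$, so in particular $V^*\ge \Tcal_k V^*$. For (ii), I would argue by iterating: suppose $V\ge \Tcal_k V$. By Lemma~\ref{lemma:monotonic_prop}, applying $\Tcal_k$ preserves pointwise inequalities, so $\Tcal_k V \ge \Tcal_k^2 V$, and inductively
$$
V \;\ge\; \Tcal_k V \;\ge\; \Tcal_k^2 V \;\ge\; \cdots \;\ge\; \Tcal_k^n V \quad\text{for all } n\ge 1.
$$
It then remains to pass to the limit and identify $\lim_{n\to\infty}\Tcal_k^n V$ with $V^*$.

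The identification step is the main obstacle, and I would handle it via a contraction argument. Unrolling the maximization inside $\Tcal_k$ and using that the reward is bounded shows that for any two bounded functions $U,V$,
$$
\|\Tcal_k U - \Tcal_k V\|_\infty \;\le\; \gamma^{k+1}\|U-V\|_\infty,
$$
because after the $k$ reward terms the only dependence on the argument is through $\gamma^{k+1}\EE^\pi[\,\cdot\,(s_{k+1})]$, and the $\max_\pi$ operation is non-expansive. Hence $\Tcal_k$ is a $\gamma^{k+1}$-contraction on $(\Lcal^\infty(\Scal),\|\cdot\|_\infty)$ with unique fixed point $V^*$, so $\Tcal_k^n V\to V^*$ for every bounded starting $V$. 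Combined with the monotone chain above, this yields $V\ge V^*$ pointwise.

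Finally, since $\mu$ is a probability distribution and $(1-\gamma^{k+1})>0$, the objective in~\eqref{eq:multi_step_optimization_form} satisfies $(1-\gamma^{k+1})\EE_\mu[V(s)]\ge (1-\gamma^{k+1})\EE_\mu[V^*(s)]$ for every feasible $V$, with equality at $V=V^*$, which is itself feasible by (i). Therefore $V^*$ is the (unique, in the support of $\mu$) minimizer, completing the proof. The only delicate piece is the contraction estimate for the multi-step operator; everything else is a direct transcription of the one-step LP derivation.
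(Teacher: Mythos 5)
Your proposal is correct and follows essentially the same route as the paper's proof: establish the monotone chain $V\ge \Tcal_k V\ge \Tcal_k^2 V\ge\cdots$ via Lemma~\ref{lemma:monotonic_prop} and identify the limit with $V^*$ by a fixed-point argument (the paper simply cites the Banach fixed point theorem where you spell out the $\gamma^{k+1}$-contraction estimate). Your added explicit feasibility check and the final passage from pointwise dominance to the $\mu$-weighted objective are details the paper leaves implicit, but the argument is the same.
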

\begin{proof}
Recall the $\Tcal_k$ is monotonic, \ie, $V \ge \Tcal_k V\Rightarrow \Tcal_k V \ge \Tcal_k^2 V$ and $V^* = \Tcal_k^\infty V$ for arbitrary $V$, we have for $\forall V$, $V\ge \Tcal_k V \ge \Tcal_k^2 V\ge \ldots\ge \Tcal_k^\infty V = V^*$, where the last equality comes from the Banach fixed point theorem~\citep{Puterman14}. Similarly, we can also show that $\forall V$, $V\ge \Tcal^\infty_\lambda V = V^*$. By combining these two inequalities, we achieve the optimization. 
\end{proof}

We rewrite the optimization as
\begin{eqnarray}\label{eq:bellman_kstep_primal}
\min_{V}&& (1 - \gamma^{k+1})\EE_{s\sim \mu(s)}\sbr{V(s)}\\
\st && V(s) \ge R(s, a) + \max_{\pi\in\Pcal}\,\EE^\pi_{\{s_i\}_{i=1}^{k+1}|s}\sbr{\sum_{i=1}^{k} \gamma^i R(s_i, a_i)  +\gamma^{k+1} V(s_{k+1})},\nonumber \\
&& \rbr{s, a}\in \Scal\times\Acal,\nonumber
\end{eqnarray}
We emphasize that this optimization is no longer linear programming since the existence of $\max$-operator over distribution space in the constraints. However, Theorem~\ref{thm:dual_property} still holds for the dual variables in~\eq{eq:full_lagrangian}. 
\begin{proof}
Denote the optimal policy as $\tilde\pi_V^* = \argmax_{\pi\in\Pcal}\,\EE^\pi_{\{s_i\}_{i=1}^{k+1}|s}\sbr{\sum_{i=1}^{k} \gamma^i R(s_i, a_i)  +\gamma^{k+1} V(s_{k+1})}$, the KKT condition of the optimization~\eq{eq:bellman_kstep_primal} can be written as 
\begin{eqnarray*}
&&\rbr{1 - \gamma^{k+1}}\mu(s') + \gamma^{k+1}\sum_{\cbr{s_i, a_i}_{i=0}^k}p(s'|s_k, a_k)\prod_{i=0}^{k-1}p(s_{i+1}|s_i, a_i) \prod_{i=1}^k\tilde\pi_V^*(a_i|s_i)\rho^*(s_0, a_0)\\
&&=\sum_{a_0, \cbr{s_i, a_i}_{i=1}^k}\prod_{i=0}^kp(s_{i+1}|s_i, a_i)\rho^*(s', a)\prod_{i=1}^k\tilde\pi_V^*(a_i|s_i).
\end{eqnarray*}
Denote $P_k^\pi(s_{k+1}|s, a) = \sum_{\cbr{s_i, a_i}_{i=1}^k}p(s_{k+1}|s_k, a_k)\prod_{i=0}^{k-1}p(s_{i+1}|s_i, a_i) \prod_{i=1}^k\pi(a_i|s_i)$, we simplify the condition, \ie, 
$$
\rbr{1 - \gamma^{k+1}}\mu(s') + \gamma^{k+1} \sum_{s, a}P_k^{\tilde\pi_V^*}(s'|s, a)\rho^*(s, a) = \sum_{a}\rho^*(s', a).
$$
Due to the $P_k^{\pi_V^*}(s'|s, a)$ is a conditional probability for $\forall V$, with similar argument in Theorem~\ref{thm:dual_property}, we have $\sum_{s, a}\rho^*(s, a) = 1$. 

By the KKT complementary condition, the primal and dual solutions, \ie, $V^*$ and $\rho^*$, satisfy
\begin{equation}\label{eq:KKT_complementary}
\rho^*(s, a)\rbr{R(s, a) + \EE^{\tilde\pi_{V^*}^*}_{\{s_i\}_{i=1}^{k+1}|s}\sbr{\sum_{i=1}^{k} \gamma^i R(s_i, a_i)  +\gamma^{k+1} V^*(s_{k+1})} - V^*(s)} = 0.
\end{equation}
Recall $V^*$ denotes the value function of the optimal policy, then, based on the definition, $\tilde\pi_{V^*}^* = \pi^*$ which denotes the optimal policy. Then, the condition~\eq{eq:KKT_complementary} implies $\rho(s, a) \neq 0$ if and only if $a= a^*$, therefore, we can decompose $\rho^*(s, a) = \alpha^*(s)\pi^*(a|s)$.

\end{proof}

The corresponding Lagrangian of optimization~\eq{eq:bellman_kstep_primal} is 
\begin{eqnarray}\label{eq:kstep_lagrangian}
\min_{V}\max_{\rho(s, a)\ge 0 } L_k(V, \rho) = (1 - \gamma^{k+1}) \EE_{\mu}\sbr{V(s)} + \sum_{\rbr{s, a}\in \Scal\times\Acal}\rho(s, a)\rbr{\max_{\pi\in\Pcal}\Delta^\pi_k[V](s, a)},
\end{eqnarray}
where $\Delta^\pi_k[V](s, a) = R(s, a) +\EE^\pi_{\{s_t\}_{i=1}^{k+1}|s}\sbr{\sum_{i=1}^{k} \gamma^i R(s_i, a_i)  +\gamma^{k+1} V(s_{k+1})} - V(s)$. 

We further simplify the optimization. Since the dual variables are positive, we have 
\begin{eqnarray}\label{eq:full_lagrangian}
\min_{V}\max_{\rho(s, a)\ge 0, \pi\in\Pcal} L_k(V, \rho) = (1 - \gamma^{k+1}) \EE_{\mu}\sbr{V(s)} + \sum_{\rbr{s, a}\in \Scal\times\Acal}\rho(s, a)\rbr{\Delta^\pi_k[V](s, a)}.
\end{eqnarray}

After clarifying these properties of the optimization corresponding to the multi-step Bellman optimality equation, we are ready to prove the Theorem~\ref{thm:multi_step_lagrangian}.

\noindent{\bf Theorem~\ref{thm:multi_step_lagrangian}}
\emph{ 
The optimal policy $\pi^*$ and its corresponding value function $V^*$ is the solution to the following saddle point problem
\begin{eqnarray*}
\max_{\alpha\in\Pcal(\Scal),\pi\in\Pcal(\Acal)}\min_{V} L_k(V, \alpha, \pi) &:=& (1 - \gamma^{k+1}) \EE_{\mu}\sbr{V(s)}
\quad\quad\quad\quad\quad\quad\quad\quad\quad\quad\quad\quad\quad\quad\eq{eq:multi_step_lagrangian}\\
&+& \hspace{-8mm}\sum_{\cbr{s_i, a_i}_{i=0}^k, s_{k+1}}\hspace{-6mm}\alpha(s_0)\prod_{i=0}^k\pi(a_i|s_i)p(s_{i+1}|s_i, a_i)\delta[V]\rbr{\cbr{s_i, a_i}_{i=0}^k, s_{k+1}}\nonumber
\end{eqnarray*}
where $\delta[V]\rbr{\cbr{s_i, a_i}_{i=0}^k, s_{k+1}} = \sum_{i=0}^{k} \gamma^i R(s_i, a_i) +\gamma^{k+1} V(s_{k+1}) - V(s)$.
}
\begin{proof}
By Theorem~\ref{thm:dual_property} in multi-step setting, we can decompose $\rho(s, a) = \alpha(s)\pi(a|s)$ without any loss. Plugging such decomposition into the Lagrangian~\ref{eq:full_lagrangian} and realizing the equivalence among the optimal policies, we arrive the optimization as
$
\min_{V}\max_{\alpha\in\Pcal(\Scal),\pi\in\Pcal(\Acal)} L_k(V, \alpha, \pi).
$
Then, because of the strong duality as we proved in Lemma~\ref{lemma:multi_step_strong_dual}, we can switch $\min$ and $\max$ operators in optimization~\ref{eq:multi_step_lagrangian} without any loss. 
\end{proof}

\begin{lemma}\label{lemma:multi_step_strong_dual}
The strong duality holds in optimization~\eq{eq:multi_step_lagrangian}.
\end{lemma}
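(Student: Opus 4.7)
The plan is to prove strong duality by exhibiting an explicit saddle point $(V^*,\alpha^*,\pi^*)$ of $L_k$ on the domain $\{V\}\times\Pcal(\Scal)\times\Pcal(\Acal)$. Once a saddle point is in hand, the chain $\max_{\alpha,\pi}\min_V L_k \geq L_k(V^*,\alpha^*,\pi^*)\geq \min_V\max_{\alpha,\pi} L_k$ combined with the universal weak duality inequality collapses the gap.

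Take $V^*$ to be the optimal value function delivered by Theorem~\ref{thm:lp_bellman}, and $\pi^*$ the optimal policy attaining the maximum in the $k$-step Bellman equation~\eqref{eq:multi_step_bellman}. The delicate piece is $\alpha^*$: define it as the unique solution of the flow fixed-point equation
$$
\alpha^*(s) \;=\; (1-\gamma^{k+1})\mu(s)\;+\;\gamma^{k+1}\sum_{s_0,a_0}\alpha^*(s_0)\,\pi^*(a_0|s_0)\,P_k^{\pi^*}(s|s_0,a_0),
$$
where $P_k^{\pi^*}$ is the $(k{+}1)$-step transition kernel already defined in the appendix. Since $\gamma^{k+1}<1$, the Neumann-series solution exists and is nonnegative, and summing both sides over $s$ shows $\sum_s\alpha^*(s)=1$, so $\alpha^*\in\Pcal(\Scal)$.

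With these candidates fixed, both saddle-point inequalities reduce to clean identities. For the upper inequality $L_k(V^*,\alpha,\pi)\leq L_k(V^*,\alpha^*,\pi^*)$: the $k$-step Bellman equation $V^*=\Tcal_k V^*$ yields $\EE^\pi[\sum_{i=0}^k\gamma^i R(s_i,a_i)+\gamma^{k+1}V^*(s_{k+1})\mid s_0]\leq V^*(s_0)$ with equality at $\pi=\pi^*$, so $\EE^\pi_\alpha[\delta[V^*]]\leq 0$ for every $(\alpha,\pi)$ and the integrand is pointwise zero at $(\alpha^*,\pi^*)$, giving the inequality (with equality attained). For the lower inequality $L_k(V,\alpha^*,\pi^*)\geq L_k(V^*,\alpha^*,\pi^*)$: expanding the expectation in $L_k(V,\alpha^*,\pi^*)$ and collecting the coefficient of each $V(s)$ produces exactly
$$
(1-\gamma^{k+1})\mu(s)\;-\;\alpha^*(s)\;+\;\gamma^{k+1}\sum_{s_0,a_0}\alpha^*(s_0)\pi^*(a_0|s_0)P_k^{\pi^*}(s|s_0,a_0),
$$
which vanishes by the defining flow equation for $\alpha^*$. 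Hence $L_k(\cdot,\alpha^*,\pi^*)$ is constant in $V$ and the inequality holds trivially (with equality).

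The main obstacle is purely the bookkeeping around $\alpha^*$: one has to separately track the $-V(s_0)$ and $+\gamma^{k+1}V(s_{k+1})$ contributions inside $\delta[V]$ so that, after taking expectation under $(\alpha^*,\pi^*)$, the coefficient of each $V(s)$ is precisely what the flow equation for $\alpha^*$ is designed to annihilate. Once that arithmetic is done, no Slater-type condition, compactness argument, or infinite-dimensional LP-duality machinery is required: the saddle-point inequalities bottom out in Bellman optimality of $V^*$ and flow conservation of $\alpha^*$, and strong duality follows immediately.
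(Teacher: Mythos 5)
Your proof is correct and follows essentially the same route as the paper's: both arguments rest on the two facts that $\EE^\pi\sbr{\delta[V^*]}\le 0$ with equality at $\pi^*$ (the $k$-step Bellman optimality equation) and that $V^*$ minimizes $L_k(\cdot,\alpha^*,\pi^*)$ (the stationarity/flow-conservation condition), the only difference being that you package them as an explicit saddle point while the paper sandwiches $\max_{\alpha,\pi}\ell(\alpha,\pi)$ between $(1-\gamma^{k+1})\EE_\mu\sbr{V^*(s)}$ from both sides. Your version is somewhat more explicit, in that you construct $\alpha^*$ via the Neumann series of the flow equation and check that the coefficient of each $V(s)$ cancels, a step the paper dispatches with ``by checking the first-order optimality.''
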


\begin{proof}
Specifically, for every $\alpha\in\Pcal(\Scal),\pi\in\Pcal(\Acal)$,
\begin{eqnarray*}
\ell(\alpha, \pi) &=& \min_{V} L_k(V, \alpha, \pi)\le \min_{V} \cbr{L_k(V, \alpha, \pi); \,\,\delta[V]\rbr{\cbr{s_i, a_i}_{i=0}^k, s_{k+1}}\le 0} 
\\
&\le& \min_{V}\cbr{\begin{matrix}
  (1 - \gamma^{k+1})\EE_{s\sim \mu(s)}\sbr{V(s)},\\
\st\,\delta[V]\rbr{\cbr{s_i, a_i}_{i=0}^k, s_{k+1}} \le 0 
\end{matrix}} = (1 - \gamma^{k+1})\EE_{s\sim \mu(s)}\sbr{V^*(s)}.
\end{eqnarray*}
On the other hand, since $L_k(V, \alpha^*, \pi^*)$ is convex w.r.t. $V$, we have 
$
V^* \in \argmin_{V} L_k(V, \alpha^*, \pi^*),
$ 
by checking the first-order optimality. Therefore, we have
\begin{eqnarray*}
\max_{\alpha\in\Pcal(\Scal),\pi\in\Pcal(\Acal)}\ell(\alpha, \pi) &=& \max_{\alpha\in\Pcal(\Scal),\pi\in\Pcal(\Acal), V\in \argmin_{V} L_k(V, \alpha, \pi)} L_k(V, \alpha, \pi) \\
&\ge& L(V^*, \alpha^*, \pi^*) =  (1 - \gamma^{k+1})\EE_{s\sim \mu(s)}\sbr{V^*(s)}.
\end{eqnarray*}
Combine these two conditions, we achieve the strong duality even without convex-concave property
$$
(1 - \gamma^{k+1})\EE_{s\sim \mu(s)}\sbr{V^*(s)}\le \max_{\alpha\in\Pcal(\Scal),\pi\in\Pcal(\Acal)}\ell(\alpha, \pi)\le (1 - \gamma^{k+1})\EE_{s\sim \mu(s)}\sbr{V^*(s)}.
$$
\end{proof}

\subsection{The Composition in Applying Augmented Lagrangian Method}\label{appendix:augmented_lagrangian}
We consider the one-step Lagrangian duality first. Following the vanilla augmented Lagrangian method, one can achieve the dual function as
\begin{eqnarray*}
\ell(\alpha, \pi) = \min_{V} \rbr{1 - \gamma}\EE_{s\sim \mu(s)} \sbr{V(s)} + \sum_{\rbr{s, a}\in\Scal\times\Acal}P_c\rbr{\Delta[V](s, a), \alpha(s)\pi(a|s)},
\end{eqnarray*}
where 
$$
P_c\rbr{\Delta[V](s, a), \alpha(s)\pi(a|s)} = \frac{1}{2c}\cbr{\sbr{\max\rbr{0, \alpha(s)\pi(a|s) + c\Delta[V](s, a)}}^2 - \alpha^2(s)\pi^2(a|s)}.
$$
The computation of $P_c$ is in general intractable due to the composition of $\max$ and the condition expectation in $\Delta[V](s, a)$, which makes the optimization for augmented Lagrangian method difficult.

For the multi-step Lagrangian duality, the objective will become even more difficult due to constraints are on distribution family $\Pcal(\Scal)$ and $\Pcal(\Acal)$, rather than $\Scal\times\Acal$.

\subsection{Path Regularization}\label{appendix:path_reg}

{\bf Theorem~\ref{thm:path_reg}} 
\emph{
The local duality holds for $L_r(V, \alpha, \pi)$. Denote $(V^*, \alpha^*, \pi^*)$ as the solution to Bellman optimality equation, with some appropriate $\eta_V$, 
$
(V^*, \alpha^*, \pi^*) = \argmax_{\alpha\in\Pcal(\Scal), \pi\in\Pcal(\Acal)}\argmin_{V} L_r(V, \alpha, \pi).
$}
\begin{proof}
The local duality can be verified by checking the Hessian of $L_r(\theta_{V^*})$. We apply the local duality theorem~\citep{LueYe15}[Chapter 14]. Suppose $
(\Vtil^*, \tilde\alpha^*, \tilde\pi^*)$ is a local solution to $\min_{V}\max_{\alpha\in\Pcal(\Scal), \pi\in\Pcal(\Acal)} L_r(V, \alpha, \pi)$, then, $\max_{\alpha\in\Pcal(\Scal), \pi\in\Pcal(\Acal)}\min_{V} L_r(V, \alpha, \pi)$ has a local solution $\Vtil^*$ with corresponding $\tilde\alpha^*, \tilde\pi^*$. 

Next, we show that with some appropriate $\eta_V$, the path regularization does not change the optimum. Let $U^\pi(s) = \EE^{\pi}\sbr{\sum_{i=0}^\infty\gamma^i R(s_i, a_i)|s}$, and thus, $U^{\pi^*} = V^*$. We first show that for $\forall \pi_b\in \Pcal(\Acal)$, we have 
\begin{eqnarray*}
&&\textstyle
\EE\sbr{\rbr{\EE^{\pi_b}\sbr{\sum_{i=0}^\infty\gamma^i R(s_i, a_i)} - V^*(s)}^2}= \EE\sbr{\rbr{U^\pi_b(s) - U^{\pi^*}(s) + U^{\pi^*}(s) - V^*(s)}^2}\\
&=& \textstyle\EE\sbr{\rbr{U^{\pi_b}(s) - U^{\pi^*}(s)}^2}\\
&\le& \EE\sbr{\rbr{\int \rbr{\prod_{i=0}^\infty\pi_b(a_i|s_i) - \prod_{i=0}^\infty\pi^*(a_i|s_i)}{\prod_{i=0}^\infty p(s_{i+1}|s_i, a_i)}\rbr{\sum_{i=1}^\infty\gamma^i R(s_i, a_i)}d\cbr{s_i, a_i}_{i=0}^\infty}^2}\\
&\le&\EE\sbr{\nbr{\sum_{i=1}^\infty\gamma^i R(s_i, a_i)}^2_\infty\nbr{\rbr{\prod_{i=0}^\infty\pi_b(a_i|s_i) - \prod_{i=0}^\infty\pi^*(a_i|s_i)}{\prod_{i=0}^\infty p(s_{i+1}|s_i, a_i)}}^2_1}\\
&\le&\textstyle4\nbr{\sum_{i=1}^\infty\gamma^i R(s_i, a_i)}^2_\infty\le \frac{4}{\rbr{1 - \gamma}^2}\nbr{R(s, a)}^2_\infty
\end{eqnarray*}
where the last second inequality comes from the fact that $\pi_b(a_i|s_i)p(s_{i+1}|s_i, a_i)$ is distribution.

We then rewrite the optimization $\min_{V}\max_{\alpha\in\Pcal(\Scal), \pi\in\Pcal(\Acal)} L_r(V, \alpha, \pi)$ as
\begin{eqnarray*}
\min_{V}\max_{\alpha\in\Pcal(\Scal), \pi\in\Pcal(\Acal)}&& L_k(V, \alpha, \pi)\\[-3mm]
\st && \textstyle
V\in \Omega_{\epsilon, \pi_b} := \cbr{ V: \EE_{s\sim\mu(s)}\sbr{\rbr{\EE^{\pi_b}\sbr{\sum_{i=0}^\infty \gamma^{i} R(s_i, a_i)} - V(s)}^2}\le \epsilon},
\end{eqnarray*}
due to the well-known one-to-one correspondence between regularization $\eta_V$ and $\epsilon$~\cite{Nesterov05}. If we set $\eta_V$ with appropriate value so that its corresponding $\epsilon(\eta_V) \ge \frac{2}{1 - \gamma}\nbr{R(s, a)}_\infty$, we will have $V^*\in \Omega_{\epsilon(\eta_V)}$, which means adding such constraint, or equivalently, adding the path regularization, does not affect the optimality. Combine with the local duality, we achieve the conclusion.

\end{proof}
In fact, based on the proof, the closer $\pi_b$ to $\pi^*$ is, the smaller $\EE_{s\sim\mu(s)}\sbr{\rbr{\EE^{\pi_b}\sbr{\sum_{i=0}^\infty \gamma^{i} R(s_i, a_i) } - V^*(s)}^2}$ will be. Therefore, we can set $\eta_V$ bigger for better local convexity, which resulting faster convergence. 

\subsection{Stochastic Dual Ascent Update}\label{appendix:dual_update}

{\bf Corollary~\ref{cor:reg_dual_grad}}
\emph{
The regularized dual function $\ell_r(\alpha, \pi)$ has gradients estimators
\begin{eqnarray*}
\nabla_{\theta_\alpha}\ell_r\rbr{\theta_\alpha, \theta_\pi} &=& 
\EE_{\alpha}^\pi\sbr{\delta\rbr{\cbr{s_i, a_i}_{i=0}^k, s_{k+1}}\nabla_{\theta_\alpha} \log\alpha(s)},
\end{eqnarray*}
\vspace{-4mm}
\begin{eqnarray*}
\textstyle
\nabla_{\theta_\pi}\ell_r\rbr{\theta_\alpha, \theta_\pi} =
\EE_{\alpha}^\pi\sbr{\delta\rbr{\cbr{s_i, a_i}_{i=0}^k, s_{k+1}}\sum_{i=0}^k\nabla_{\theta_\pi}\log\pi(a|s)}.
\end{eqnarray*}
}
\begin{proof}
We mainly focus on deriving $\nabla_{\theta_\pi}\ell_r\rbr{\theta_\alpha, \theta_\pi}$. The derivation of $\nabla_{\theta_\alpha} \ell_r\rbr{\theta_\alpha, \theta_\pi}$ is similar.

By chain rule, we have
\begin{eqnarray*}
\nabla_{\theta_\pi}\ell_r\rbr{\theta_\alpha, \theta_\pi} &=& \underbrace{\textstyle \rbr{\nabla_{V} L_k(V(\alpha, \theta), \alpha, \theta) - 2\eta_V \rbr{\EE^{\pi_b}\sbr{\sum_{i=0}^\infty \gamma^{i} R(s_i, a_i) } - V^*(s)}}}_{0}\nabla_{\theta_\pi} V(\alpha, \theta)\\
&&+\EE_{\alpha}^\pi\sbr{\delta\rbr{\cbr{s_i, a_i}_{i=0}^k, s_{k+1}}\sum_{i=0}^k\nabla_{\theta_\pi}\log\pi(a|s)}\\
&=&\EE_{\alpha}^\pi\sbr{\delta\rbr{\cbr{s_i, a_i}_{i=0}^k, s_{k+1}}\sum_{i=0}^k\nabla_{\theta_\pi}\log\pi(a|s)}.
\end{eqnarray*}
The first term in RHS equals to zero due to the first-order optimality condition for $V(\alpha, \pi) = \argmin_V L_r(V, \alpha, \pi)$. 
\end{proof}

\subsection{Practical Algorithm}\label{appendix:practical_alg}

{\bf Theorem~\ref{thm:closed_alpha}}
\emph{
In $t$-th iteration, given $V^t$ and $\pi^{t-1}$,
\begin{eqnarray*}
&&\argmax_{\alpha\ge 0}\EE_{\mu(s)\pi^{t-1}(s)}\sbr{\rbr{\tilde\alpha(s)+ \eta_\mu}\delta\rbr{\cbr{s_i, a_i}_{i=0}^k, s_{k+1}}} - \eta_\alpha\nbr{\tilde\alpha}^2_{\mu}\\
&=&\frac{1}{\eta_\alpha}\max\rbr{0,\EE^{\pi^{t-1}}\sbr{\delta\rbr{\cbr{s_i, a_i}_{i=0}^k, s_{k+1}}}}.
\end{eqnarray*}
}
\begin{proof}
Recall the optimization w.r.t. $\tilde\alpha$ is
$
\max_{\tilde\alpha \ge 0} \EE_{\mu}\sbr{{\tilde\alpha(s) }{\EE^{\pi}\sbr{\delta\rbr{\cbr{s_i, a_i}_{i=0}^k, s_{k+1}}}}  - \eta_\alpha \tilde\alpha^2(s)},
$
denote $\tau(s)$ as the dual variables of the optimization, we have the KKT condition as
\[
\begin{cases}
\eta_\alpha\tilde\alpha &= \tau + \EE^{\pi}\sbr{\delta\rbr{\cbr{s_i, a_i}_{i=0}^k, s_{k+1}}},\\
\tau(s)\tilde\alpha(s)&= 0,\\
\tilde\alpha&\ge 0, \\
\tau&\ge 0,
\end{cases}
\]
\[
\Rightarrow
\begin{cases}
\tilde\alpha &= \frac{\tau + \EE^{\pi}\sbr{\delta\rbr{\cbr{s_i, a_i}_{i=0}^k, s_{k+1}}} }{\eta_\alpha},\\
\tau(s)\rbr{\tau(s) + \EE^{\pi}\sbr{\delta\rbr{\cbr{s_i, a_i}_{i=0}^k, s_{k+1}}}}&= 0,\\
\tilde\alpha&\ge 0, \\
\tau&\ge 0,
\end{cases}\]
\[\Rightarrow
\tau(s) =
\begin{cases}
-\EE^{\pi}\sbr{\delta\rbr{\cbr{s_i, a_i}_{i=0}^k, s_{k+1}}}&\quad \EE^{\pi}\sbr{\delta\rbr{\cbr{s_i, a_i}_{i=0}^k, s_{k+1}}}< 0\\
0&\quad \EE^{\pi}\sbr{\delta\rbr{\cbr{s_i, a_i}_{i=0}^k, s_{k+1}}}\ge 0
\end{cases}.
\]Therefore, in $t$-th iteration, $\tilde\alpha^{t}(s)= \frac{1}{\eta_\alpha}\max\rbr{0,\EE^{\pi}\sbr{\delta\rbr{\cbr{s_i, a_i}_{i=0}^k, s_{k+1}}}}.
$
\end{proof}

\section{Experiment Details}\label{appendix:exp_details}

\noindent{\bf Policy and value function parametrization.} For fairness, we use the same parametrization across all the algorithms. The parametrization of policy and value functions are largely based on the recent paper by~\citet{RajLowTodKak17}, which shows the natural policy gradient with the RBF neural network achieves the state-of-the-art performances of TRPO on MuJoCo. For the policy distribution, we parametrize it as $\pi_{\theta_\pi}(a|s) = \Ncal(\mu_{\theta_\pi}(s), \Sigma_{\theta_\pi})$, where $\mu_{\theta_\pi}(s)$ is a two-layer neural nets with the random features of RBF kernel as the hidden layer and the $\Sigma_{\theta_\pi}$ is a diagonal matrix. The RBF kernel bandwidth is chosen via median trick~\citep{DaiXieHe14,RajLowTodKak17}. The same as~\citet{RajLowTodKak17}, we use $100$ hidden nodes in Pendulum, InvertedDoublePendulum, Swimmer, Hopper, and use $500$ hidden nodes in HalfCheetah. Since the TRPO and PPO uses GAE~\citep{SchMorLevJoretal15} with linear baseline as $V$, we also use the parametrization for $V$ in our algorithm. However, the \algabb~can adopt arbitrary function approximator without any change.

\noindent{\bf Training details.} We report the hyperparameters for each algorithms here. We use the $\gamma = 0.995$ for all the algorithms. We keep constant stepsize and tuned for TRPO, PPO and \algabb~in $\cbr{0.001, 0.01, 0.1}$. The batchsize are set to be $52$ trajectories for comparison to the competitors in Section~\ref{subsection:exp_comparison}. For the Ablation study, we set batchsize to be $24$ trajectories for accelerating. The CG damping parameter for TRPO is set to be $10^{-4}$. We iterate $20$ steps for the Fisher information matrix computation. For the $\eta_V, \eta_\mu, \frac{1}{\eta_\alpha}$ in \algabb~from $\cbr{0.001, 0.01, 0.1, 1}$.

\end{appendix}

\end{document}